\theoremstyle{plain}
\newtheorem{theorem}{Theorem}[section]
\newtheorem{lemma}[theorem]{Lemma}
\theoremstyle{definition}
\newtheorem{assumption}[theorem]{Assumption}
\theoremstyle{remark}
\definecolor{mydarkred}{RGB}{192,25,25}
\definecolor{mydarkgreen}{RGB}{25,192,25}
\definecolor{mydarkblue}{RGB}{25,25,192}
\newcommand{\norm}[1]{{\left\| #1 \right\|}}
\newcommand{\rbr}[1]{\left(#1\right)} 
\newcommand{\abr}[1]{\left\langle#1\right\rangle} 
\newcommand{\cO}{\mathcal{O}}
\newcommand{\cS}{\mathcal{S}}
\newcommand{\R}{\mathbb{R}}  
\newcommand{\del}[1]{}
\newcommand{\eqdef}{:=}
\newtheorem{example}[theorem]{Example}
\renewcommand{\paragraph}[1]{\noindent \textbf{#1}}
\title{\vspace{-5px}PV-Tuning: Beyond Straight-Through Estimation \\ for Extreme LLM Compression\vspace{-5px}}
\author{%
  Vladimir Malinovskii$^\dagger$ \\
  Yandex, HSE University \\
  \And
  Denis Mazur$^\dagger$ \\
  MIPT$^\diamond$, Researchcore \\
  \And
  Ivan Ilin$^\dagger$ \\
  \!\!AI Initiative, KAUST$^*$\!\! \\
  \And
  Denis Kuznedelev \\
  Yandex, Skoltech \\
  \And
  \hspace{-15px}Konstantin Burlachenko \\
  \hspace{-15px}\!\!AI Initiative, KAUST$^*$\!\! \\
  \And
  Kai Yi \\
  \!\!AI Initiative, KAUST$^*$\!\! \\
  \And
  Dan Alistarh$^\ddagger$ \\
  IST Austria, NeuralMagic \\
  \And
  Peter Richtarik$^\ddagger$\hspace{-20px} \\
  \!\!AI Initiative, KAUST$^*$\!\!\hspace{-20px} \\
}
\begin{document}

\maketitle

\def\thefootnote{$\dagger$}\footnotetext{Equal contribution. \quad $\ddagger$ Equal senior authors. \quad $\diamond$ Moscow Institute of Physics and Technology, Russia} \def\thefootnote{\arabic{footnote}}
\def\thefootnote{*}\footnotetext{
King Abdullah University of Science and Technology, Saudi Arabia}\def\thefootnote{\arabic{footnote}}

\begin{abstract}\vspace{-5px}
    There has been significant interest in ``extreme'' compression of large language models (LLMs), i.e., to 1-2 bits per parameter, which allows such models to be executed efficiently on resource-constrained devices.  
    Existing work focused on improved one-shot quantization techniques and weight representations; yet, purely post-training  approaches are reaching diminishing returns in terms of the accuracy-vs-bit-width trade-off. State-of-the-art quantization methods such as QuIP\# and AQLM include fine-tuning (part of) the compressed parameters over a limited amount of calibration data; however, such fine-tuning techniques over compressed weights often make exclusive use of \emph{straight-through estimators (STE)}, whose performance is not well-understood in this setting. 
    In this work, we question the use of STE for extreme LLM compression, showing that it can be sub-optimal, and perform a systematic study of quantization-aware fine-tuning strategies for LLMs. 
  We propose PV-Tuning --- a representation-agnostic framework that generalizes and improves upon existing fine-tuning strategies, and provides convergence guarantees in restricted cases.
  On the practical side, when used for 1-2 bit vector quantization, PV-Tuning outperforms prior techniques for highly-performant models such as Llama and Mistral. 
  Using PV-Tuning, we achieve the first Pareto-optimal quantization for Llama-2 family models at  2 bits per parameter.

\end{abstract}

\section{Introduction}\label{sect:intro}
\vspace{-5px}

Recent years have seen the development of ever more capable large language models, attracting immense  interest from both researchers and industry. One of the driving factors behind progress in this area is the availability of powerful \textbf{open} LLMs such as Llama~\cite{touvron2023llama}, Mistral~\cite{jiang2023mistral,jiang2024mixtral}, or Phi~\cite{li2023textbooks}. The main advantage of open LLMs is that they can be run and fine-tuned locally by end users; however, as state-of-the-art LLMs grow larger, they also become harder to run on commodity hardware. For instance, in order to fit the best available Llama-3 model on a consumer GPU, the model would have to be compressed to below 2 bits per parameter\footnote{At the time of writing, the best open model (Llama-3 70B) takes up 130GB in FP16, while most consumer GPUs have 8-24GiB DRAM, some of which must be reserved for the attention cache.}.

To achieve such ``extreme'' degrees of compression accurately, researchers have proposed a variety of techniques, which can be roughly categorized into i) better quantized weight representations and ii) better algorithms to learn these representations. The weight representations used for extreme quantization include group quantization~\citep{frantar2022gptq, dettmers2022case}, sparse high-precision outliers~\cite{dettmers2022llm,huang2024billm}, incoherence processing of the weights~\cite{chee2023quip,tseng2024quip}, or additive and residual quantization~\cite{egiazarian2024extreme,van2024gptvq}.
In turn, the calibration algorithms also vary between data-free methods~\citep{dettmers2022case}, layer-wise calibration~\citep{frantar2022gptq, dettmers2023spqr}, block-wise or global fine-tuning~\cite{egiazarian2024extreme,tseng2024quipsharp} or even quantization-aware training~\citep{yao2022zeroquant, wang2023bitnet}. However, the weight representation and the fine-tuning algorithm are largely orthogonal: most popular quantized representations could be obtained layer-wise in one-shot, fine-tuned layer-wise to a variety of optimization objectives, or even trained entirely from scratch.

\begin{figure}[t!]
    \vspace{-30px}
    \centering
    \includegraphics[width=0.45\linewidth]{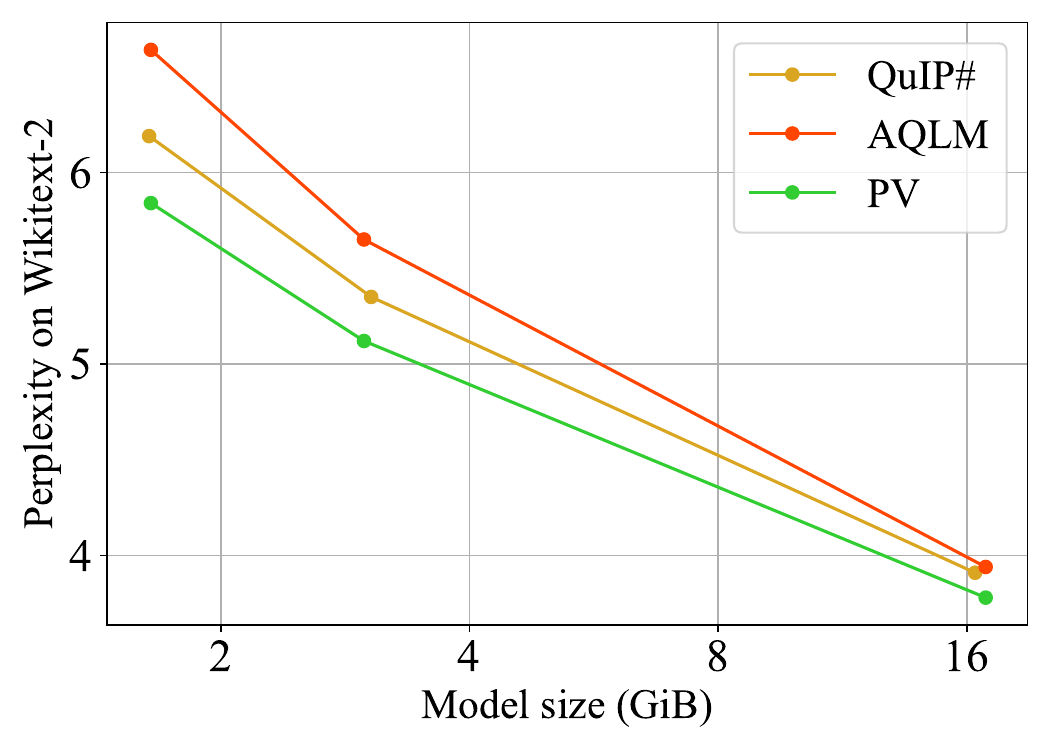}
    \hspace{20px}
    \includegraphics[width=0.45\linewidth]{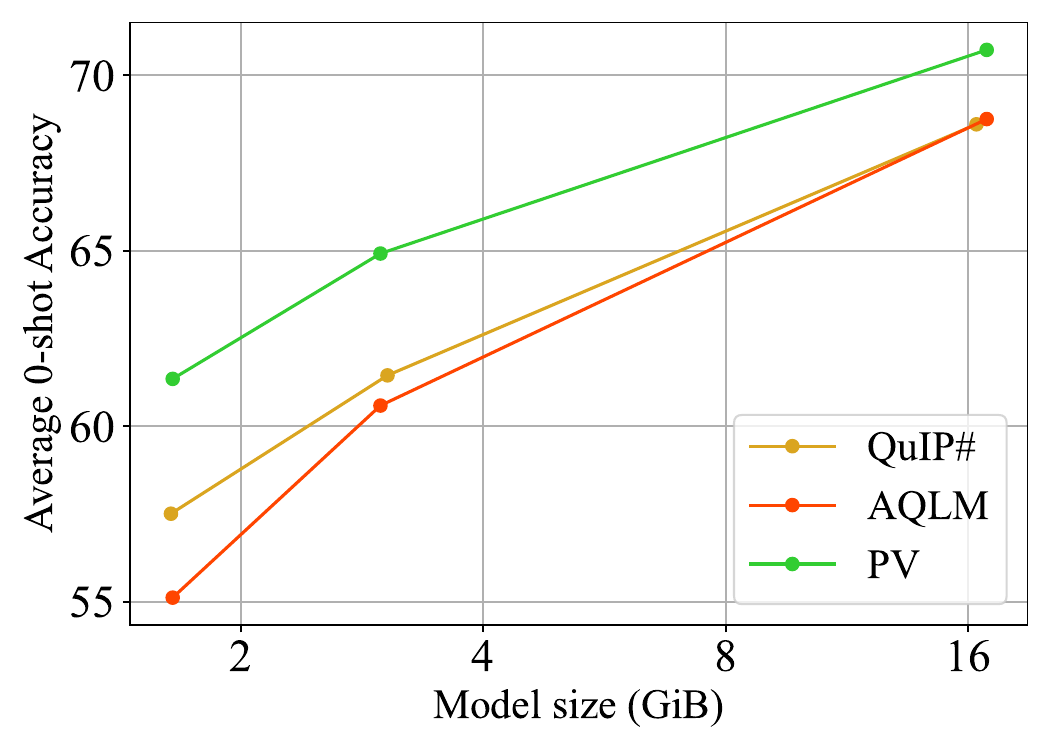}    
    \vspace{-5px}
    \caption{WikiText-2 perplexity (left) and average zero-shot accuracy (right) of 2-bit quantized \textsc{Llama 2} models as a function of model size (GiB). See detailed setup in Section~\ref{sect:experiments_maintable}.}
    \vspace{-10px}
    \label{fig:page1_pr_figure}
\end{figure}

Surprisingly, there is a clear disparity between the degree of interest shown to accurate one-shot quantization versus accurate fine-tuning. Specifically, one-shot quantization is very well-studied, to the extent that, as shown in Figure~\ref{fig:exp_representations}, 
 improvements in this direction are clearly saturating. 
 At the same time, the impact of fine-tuning strategy is largely unknown: 
while many recent works use some form of fine-tuning~\citep{shao2023omniquant,egiazarian2024extreme,tseng2024quipsharp}, they typically consider a single fine-tuning regimen based on straight-through estimation (STE)~\citep{bengio2013estimating, courbariaux2014training}. Thus, given the multitude of representations considered, it is not at all clear whether current fine-tuning strategies are optimal.


In this work, we analyze the problem of fine-tuning over highly-compressed weights from the optimization perspective. We begin by analyzing popular fine-tuning strategies for extreme LLM quantization.
The key challenge in this context is that the quantized representations may contain both continuous and discrete variables: while  continuous parameters, such as learnable scales or codebooks, can be optimized by  backpropagation, the discrete parameters (e.g., integer assignments for the weights) cannot.
Existing fine-tuning techniques either do not optimize over discrete parameters at all~\cite{tseng2024quipsharp,egiazarian2024extreme} or fine-tune them using heuristics such as STE or stochastic rounding~\citep{alistarh2016qsgd}.
Unfortunately, these methods are not well-justified for weight quantization from the point of view of optimization theory, and, as we show in Section~\ref{sect:method}, can provide poor practical performance. 

We propose an alternative solution: instead of following heuristic gradient estimates, our approach follows the actual gradient of the objective in a small subspace of optimized parameters where it can be meaningfully improved. Following this insight, we formulate the \emph{PV-tuning framework} for fine-tuning arbitrary quantized representations. We update both discrete and continuous components to minimize a global objective function, such as the KL divergence relative to the original model predictions. 
Our results show that this strategy leads to significant improvements across weight representations, achieving new state-of-the-art in compression-accuracy trade-offs.

The main contributions of our work can be summarized as follows:\begin{enumerate}[leftmargin=20px]
    \vspace{-5px}\item We analyze the problem for training discrete quantized representations for better understanding of the limitations of existing optimization algorithms. We then propose a novel algorithm inspired by compressed gradient methods that addresses these limitations. When compared to straight-through estimation and stochastic rounding, our approach 1) can be shown to converge to a stable solution; and 2) this solution is significantly more accurate in practice.
    \item We generalize the proposed algorithm into the PV-Tuning framework\footnote{The official implementation is available at \url{https://github.com/Vahe1994/AQLM/tree/pv-tuning}.}, 
    which can minimize a global objective function over a general quantized representation, by optimizing both continuous and discrete parameters via a variant of coordinate descent. 

    \item     
    We demonstrate that PV-tuning can improve quantized model accuracy for leading existing approaches, including GPTQ and AQLM, on popular LLMs including Llama-2 \& 3 and Mistral. Our procedure achieves state-of-the-art accuracy (measured through perplexity) in 1- and 2-bit quantization regimes while using the same amount of calibration data as the original algorithms. Importantly, the PV-tuned models use the same underlying weight representations, and are compatible with existing inference kernels. In terms of accuracy per model size, PV-tuning of vector quantization outperforms all prior techniques in the 1-3 bits/parameter range, and  
     is the first to achieve Pareto-optimal quantization for Llama 2 models at around 2 bits per parameter.  
\end{enumerate}


\section{Background}\label{sect:related}

\paragraph{Post-Training LLM Quantization (PTQ).}\label{sect:related_ptq}
There has been significant interest in PTQ methods~\citep{nagel2020up, gholami2021survey} that would scale to LLMs. 
Early work~\citep{dettmers2022llm, yao2022zeroquant, park2022nuqmm} used direct round-to-nearest (RTN) quantization over weight groups of well-chosen size.  
GPTQ~\citep{frantar2022gptq} improved upon these results significantly via an accurate one-shot solver for minimizing layer-wise compression errors.
Next, AWQ~\citep{lin2023awq} improved upon these results by employing per-channel scaling to reduce the error on important weights while SqueezeLLM~\citep{kim2023squeezellm} implemented non-uniform quantization.
QuIP~\citep{chee2023quip} proposed a more accurate weight representation by leveraging incoherence matrices. 
Another line of works \cite{dettmers2023spqr,lee2024owq} proposes an improved quantized weight representation, which saves a small fraction of outliers in full precision. Other recent works propose augmenting quantized representations with lowrank ``adapters'' that compensate quantization error~\cite{guo2024lqlora,zhang2024lqer}. Recently, BiLLM~\citep{huang2024billm} developed residual binarization that stores salient weights in progressively higher bitwidth, quantizing models to nearly 1 bit per parameter at non-catastrophic accuracy loss.
 

Currently, the state-of-the-art methods in terms of accuracy-vs-size are QuIP\#~\cite{tseng2024quipsharp} 
and AQLM~\cite{egiazarian2024extreme}. Both methods work roughly by mapping weight groups to points on  highly-dimensional lattices, which are either chosen to satisfy some optimality properties (for QuIP\#) or are learned (for AQLM). 
Interestingly, AQLM showed that fine-tuning the continuous parameters (codebooks) can improve accuracy significantly relative to pure one-shot compression; a variant of this approach was also adopted by QuIP\#. 
PV-Tuning is compatible with both methods: as we show, it can lead to state-of-the-art compression results for such  representations.

\paragraph{Fine-tuning over Quantized Weights.}\label{sect:related_qat}
As mentioned above, the two SOTA quantization techniques apply fine-tuning, but only update \emph{continuous} parameters, such as quantization scales. 
When optimizing over \emph{discrete} parameter sets, a standard choice in deep learning is the Straight-Through Estimator (STE)~\cite{bengio2013estimating, courbariaux2014training,NIPS2017_7a98af17}. 
Prior work on LLM compression proposed to update both continuous and discrete parameters, via STE, both for post-training quantization~\cite{yao2022zeroquant, shao2023omniquant} and for training quantized networks from scratch~\cite{huang2024billm}. 
However, it was observed early on that STE leads to instability when fine-tuning heavily quantized LLMs~\cite{yao2022zeroquant}. 
While early results suggest that STE can perform well when training quantized models from scratch~\cite{ma2024era}, this behavior is yet to be validated for highly-performant multi-billion-parameter models, which are the focus of our work.  

In summary, the two standard approaches for fine-tuning quantized LLMs are 1) fine-tuning only over the continuous parameters, such as quantization scales, which heavily limits the number of trainable parameters; and 2) optimizing all parameters via the STE, which however is known to be quite noisy especially for extreme quantization. 
In this context, our work proposes alternative approaches in the post-training compression setting, which lead to state-of-the-art results relative to both options.



\section{Fine-Tuning Quantized Models}\label{sect:method}

In this section, we study the problem of fine-tuning quantized models to minimize a global  objective, such as cross-entropy. Section~\ref{sect:method_problem} formulates this problem from an optimization perspective and introduces our notation.
In Section~\ref{sect:method_discrete_issues}, we analyze several popular strategies for solving this problem and highlight some of their limitations. To circumvent these limitations, we propose an alternative optimization algorithm in Section~\ref{sect:method_ours} and discuss implementation details in Section~\ref{sect:method_details}.

\subsection{Problem description}\label{sect:method_problem}

Consider the problem of minimizing objective (loss) $\phi$,
\begin{equation}
\min_{x \in \R^d_c}  \phi(x),
\end{equation}
where $\phi:\R^d \to \R$ is a differentiable function bounded from below (e.g., by zero), and $\R^d_c \subset \R^d$ is a set of all possible quantized weights that can be represented with a given quantization method. 
\vspace{-1px}
Without loss of generality\footnote{We explain how this generalizes to other quantized representations in Appendix~\ref{app:generalization}}, we first analyze the case of scalar nonlinear quantization. 
In this scenario, $c\in [d]\eqdef \{1,2,\dots,d\}$ (typically $c\ll d$), and $\R^d_c \subset \R^d$ is the set of all vectors in $\R^d$ whose $d$ entries take exactly $c$ distinct values. In other words, the cardinality of the set $V(x)\eqdef \{x_1,\dots,x_d\}$ is equal to $c$, and we can therefore write
$ \mathbb{R}^d_c \eqdef \{x \in \R^d \;:\; | V(x) | = c \}.$

\textbf{Useful notation.}
A vector $x\in \R^d_c $ naturally induces a partition, which we shall call $P(x)$,  of the set $\{1,\dots,d\}$ into $c$ nonempty subsets $P_1(x),\dots,P_c(x)$ characterized by
\[ x_i = x_j \quad \Leftrightarrow \quad \exists k \;:\; i\in P_k \text{ and } j\in P_k .\]

\vspace{-5px}Let's denote $P(x) \eqdef \{P_1(x),\dots,P_c(x)\}$. Moreover, we shall write $P(y)\supseteq P(x)$ if each element of $P(x)$ is a subset of some element of $P(y)$.  For distinct $i,j \in [d]$, let us introduce the notation
$\delta_{ij}(x) = 1$ if there exists $k$ such that $i,j \in P_k(x)$, and $ \delta_{ij}(x) = 0$ otherwise.  Given this notation, notice that $P(y)\supseteq P(x)$ if and only if for all $ i\neq j$ we have
$\delta_{ij}(x) = 1 \Rightarrow y_i = y_j.$ 
Finally, we define $\R^d_{\leq c} \eqdef \R^d_{1} \cup \cdots \cup \R^d_{c}$ as the set of all vectors in $\R^d$ whose $d$ entries take at most $c$ distinct values. So, if $x\in \R^d_c$ and $P(y)\supseteq P(x)$, then  $y \in \R^d_{\leq c}$.

\textbf{PV method.}
Following this notation, we define an optimization algorithm that alternates between optimizing $\phi$ with fixed  $P$ or fixed $V$. From a practitioner's point of view, these represent optimizing continuous parameters (scales, codebooks, zeros) and discrete codes (assignments), respectively.

$\diamond$ \textbf{The P step (fixing $P$).} Given  $x \in \R^d_c$, 
consider the mapping 
\begin{equation}
M_P(x) = M_{P, \phi}(x) \eqdef \arg\min_{y \in \R^d} \{\phi(y) \,:\, P(y) \supseteq P(x) \}.
\end{equation}
Notice that, necessarily, $M_P(x)  \in \R^d_{\leq c}$ and $\phi(M_P(x))  \leq \phi(M_P(x)) \leq \phi(x).$ Evaluating $M_P$ amounts to solving an unconstrained optimization problem in a $c$-dimensional space.

$\diamond$ \textbf{The V step (fixing $V$).} Similarly, given  $y \in \R^d_c$,  we define the mapping
\begin{equation}\label{eq:09u0fd9hfd}
M_V(y) = M_{V, \phi}(y) \eqdef \arg\min_{x \in \R^d} \{\phi(x) \,:\, V(x) \subseteq V(y) \}.
\end{equation}

Likewise, $M_V(y)  \in \R^d_{\leq c}$
and $\phi(M_V(y))  \leq \phi(M_V(y)) \leq \phi(y).$  Evaluating  $M_V$ amounts to solving difficult discrete optimization problems with a search space of size $|V(x)|^d \leq c^d$ (exponential in $d$).


\vspace{-2px}\begin{algorithm}[h!]
    \caption{PV algorithm}
    \label{alg:pv_alg}
    \begin{algorithmic}[1]
    \STATE \textbf{Initialization:} starting point $x^0 \in \R^d_{\leq c}$
    \FOR{$k = 0, 1, \dots$}
    \STATE $y^k = M_P(x^k) \eqdef \arg{\min_{y \in \R^d}{\left\{\phi(y) : P(y) \supseteq P(x^{k})\right\}}}$ \hfill (P step: continuous)
    \STATE $x^{k+1} = M_V(y^k) \eqdef  \arg{\min_{x \in \R^d}{\left\{\phi(x) : V(x) \subseteq V(y^k)\right\}}}$ \hfill (V step: discrete)
    \ENDFOR
    \end{algorithmic}
\end{algorithm}\vspace{-3px}

Our key algorithmic idea, in its simplest form, is to optimize $\phi$ by alternating the P and V steps, i.e., iteratively applying the $M_P$ and $M_V$ operators. 
(We will propose several  more practically-useful approximations and variations later; see Sections~\ref{sect:method_discrete_issues}--\ref{sect:method_ours} and also Appendix~\ref{sec:PV-approx}.) This resulting method, which we call the PV method, is formalized as Algorithm~\ref{alg:pv_alg}. Our key guarantee for the PV method is formalized in the next result.
\begin{theorem}[Convergence of the PV method]
\label{th:PV_theorem}
Assume $\phi$ is bounded below, and let $x^0\in \R^d_c$. Then (i) $y^k\in \R^d_{\leq c}$ and $x^k \in \R^d_{\leq c}$ for all $k\geq 0$; (ii) $\phi(x^{k+1}) \leq \phi(y^k) \leq \phi(x^k)$ for all $k\geq 0$; and
(iii) the sequence $\{\phi(x^{k})\}_{k\geq 0}$ converges.
\end{theorem}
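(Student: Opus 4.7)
The plan is to prove the three claims in order, since (ii) follows from the definitions of $M_P,M_V$, (i) follows by induction using the definitions, and (iii) follows from (ii) together with the boundedness assumption on $\phi$ — a standard monotone-convergence argument familiar from alternating minimization.

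For part (i), I would argue by induction on $k$. The base case is immediate since $x^0 \in \R^d_c \subseteq \R^d_{\leq c}$. For the inductive step, assume $x^k \in \R^d_{\leq c}$. By definition, $y^k = M_P(x^k)$ satisfies $P(y^k) \supseteq P(x^k)$; since each element of $P(x^k)$ is contained in some element of $P(y^k)$, the partition $P(y^k)$ has at most as many parts as $P(x^k)$, so $|V(y^k)| \leq |V(x^k)| \leq c$, hence $y^k \in \R^d_{\leq c}$. Similarly, $x^{k+1} = M_V(y^k)$ satisfies $V(x^{k+1}) \subseteq V(y^k)$, so $|V(x^{k+1})| \leq |V(y^k)| \leq c$ and thus $x^{k+1} \in \R^d_{\leq c}$. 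This observation is already sketched in the text immediately after the definitions of $M_P$ and $M_V$; I would just formalize it into an induction.

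For part (ii), the key point is that $x^k$ lies in the feasible set defining $M_P(x^k)$ (trivially $P(x^k) \supseteq P(x^k)$), so by optimality of $y^k$ we get $\phi(y^k) \leq \phi(x^k)$. Likewise, $y^k$ lies in the feasible set defining $M_V(y^k)$ (trivially $V(y^k) \subseteq V(y^k)$), giving $\phi(x^{k+1}) \leq \phi(y^k)$. Chaining the inequalities yields $\phi(x^{k+1}) \leq \phi(y^k) \leq \phi(x^k)$.

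For part (iii), claim (ii) shows that $\{\phi(x^k)\}_{k \geq 0}$ is monotonically non-increasing. Together with the assumption that $\phi$ is bounded below, the monotone convergence theorem for real sequences immediately yields that $\{\phi(x^k)\}_{k\geq 0}$ converges to some limit $\phi^* \geq \inf \phi$.

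I do not expect any serious obstacle here — the only subtlety is that $M_P$ and $M_V$ are defined as $\arg\min$, which implicitly assumes existence of minimizers. Since the statement itself is phrased in terms of the iterates produced by the algorithm, I would simply note that whenever these minimizers exist (e.g., if one selects any element of the $\arg\min$ set), the above argument goes through verbatim. I would not pursue a finer convergence statement (such as convergence of $\{x^k\}$ itself, or a rate), since the theorem only asserts convergence of the function values.
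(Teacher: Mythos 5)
Your proof is correct and follows essentially the same approach as the paper: part (i) by induction using the constraints $P(y^k)\supseteq P(x^k)$ and $V(x^{k+1})\subseteq V(y^k)$, part (ii) by feasibility of the previous iterate in each $\arg\min$, and part (iii) by monotone convergence of a non-increasing sequence bounded below. Your note about implicit existence of minimizers is a reasonable observation that the paper leaves tacit.
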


The proof can be found in Appendix~\ref{sec:PV_theorem}. Note that we do not claim that the method converges to a minimizer of $\phi$; the optimization problem is too difficult for us to be able to guarantee this. However, as we shall see in the numerical results, we nevertheless obtain great empirical performance, especially when coupling the  PV approach with some additional algorithmic tricks.

This general approach is popular in ``shallow'' machine learning problems; for instance, if $\phi(x) = \|x - z\|^2$ is the squared error with respect to some user-specified vector $z$, then the above algorithm recovers $1$-dimensional $K$-means on the data vector $z$. Likewise, if $\phi(\cdot)$ is the log-likelihood, then, depending on the choice of the set $\mathbb{R}^d_c$, the approach is related to the EM algorithm~\citep{dempster1977maximum}. 

In turn, we apply the PV method to obtaining highly-accurate quantized LLMs.
Applying the PV method ``as is'',  would be infeasible in practice:  computing the P and V mappings requires solving difficult optimization problems especially due to LLM parameter scales. However, both mappings can be approximated. Computing the P step ($M_P(\cdot)$) exactly is a an unconstrained optimization problem in a $c$-dimensional space. However, for many quantized representations, $M_P(x)$ can be approximated by one or more steps of GD, directly optimizing $\phi$ over the set $V(x)$ of its $c$ unique values. The $c$-dimensional gradient can be computed efficiently by backprop, as described in prior works~\citep{shao2023omniquant,tseng2024quipsharp}.
On the other hand, the V step ($M_V(\cdot)$) is more difficult to approximate as it involves searching a discrete space of size $c^d$. We dedicate the next two sections to this task.

\subsection{Linearized V step \& gradient-based discrete updates}\label{sect:method_discrete_issues}


The V mapping \eqref{eq:09u0fd9hfd} 
can be approximated by solving a discrete least squares problem using an approximation of $\phi(x)$ around $y$:
\begin{equation}\label{eq:09-98u98yfhd}
   \textstyle  \phi(x) \approx \widetilde{\phi}_y(x) \eqdef \phi(y) + \abr{\nabla \phi(y),x-y} + \frac{L}{2}\norm{x-y}^2,
\end{equation}
where $L>0$ is a sufficiently large constant. Subsequently, we perform the V step using the simpler convex quadratic function $\widetilde{\phi}_y$ instead of the typically more complicated function $\phi$:
\vspace{-2px}
\[M_{V,\phi}(y) \overset{\eqref{eq:09-98u98yfhd}}{\approx} M_{V,\widetilde{\phi}_y}(y) \overset{\eqref{eq:09u0fd9hfd}}{=} \arg\min_{x \in \R^d} \left\{\widetilde{\phi}_y(x) \;:\; V(x) \subseteq V(y) \right\} .\]
\vspace{-10px}

Our first lemma shows that we can replace $\widetilde{\phi}_y$ by a more convenient function $\widehat{\phi}_y$ measuring the squared distance between $x$ and $y^+ \eqdef y - \frac{1}{L} \nabla \phi(y)$, the latter being the point obtained after taking a single GD step from $y$ with learning rate $\frac{1}{L}$, disregarding the constraint:
\begin{lemma}\label{lem:prox} For any $y \in \R^d_{\le c}$ we have
\vspace{-3px}$M_{V,\widetilde{\phi}_y}(y) = M_{V,\widehat{\phi}_y}(y),$
where \begin{equation}\textstyle \widehat{\phi}_y(x) \eqdef \norm{x - \rbr{y - \frac{1}{L} \nabla \phi(y)}}^2= \norm{x - y^+}^2 = \sum \limits_{i=1}^d \left( x_i - y^+_i\right)^2.\end{equation} 
\end{lemma}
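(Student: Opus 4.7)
The claim is purely algebraic: I would show that $\widetilde{\phi}_y$ and $\widehat{\phi}_y$ agree up to a positive multiplicative constant and an additive constant, both independent of the optimization variable $x$. Since the constraint set $\{x \in \R^d : V(x) \subseteq V(y)\}$ is the same for both minimization problems, equality of the argmin sets then follows immediately.

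\textbf{Key computation.} My plan is to complete the square in $\widetilde{\phi}_y(x)$. Starting from the definition
\[\widetilde{\phi}_y(x) = \phi(y) + \langle \nabla \phi(y), x-y\rangle + \tfrac{L}{2}\norm{x-y}^2,\]
I would factor out $\tfrac{L}{2}$ from the two $x$-dependent terms and absorb the linear term by adding and subtracting $\tfrac{1}{2L}\norm{\nabla \phi(y)}^2$. This yields
\[\widetilde{\phi}_y(x) = \tfrac{L}{2}\norm{x - (y - \tfrac{1}{L}\nabla \phi(y))}^2 + \phi(y) - \tfrac{1}{2L}\norm{\nabla \phi(y)}^2 = \tfrac{L}{2}\widehat{\phi}_y(x) + C_y,\]
where $C_y \eqdef \phi(y) - \tfrac{1}{2L}\norm{\nabla \phi(y)}^2$ does not depend on $x$.

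\textbf{Conclusion.} Since $L/2 > 0$ and $C_y$ is a constant in $x$, the affine relation $\widetilde{\phi}_y = \tfrac{L}{2}\widehat{\phi}_y + C_y$ is argmin-preserving over any fixed feasible set. Applying this with the common constraint $V(x) \subseteq V(y)$ gives $M_{V,\widetilde{\phi}_y}(y) = M_{V,\widehat{\phi}_y}(y)$, as required. The coordinate-wise expansion $\widehat{\phi}_y(x) = \sum_{i=1}^d (x_i - y^+_i)^2$ is then just the definition of the squared Euclidean norm. There is no real obstacle here; the only thing to be slightly careful about is to record the identity at the level of functions (not just minimum values), so that the equality of the argmin sets is unambiguous.
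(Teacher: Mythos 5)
Your proof is correct and follows essentially the same route as the paper: both complete the square in $\widetilde{\phi}_y$ and observe that dropping $x$-independent additive constants and a positive multiplicative factor does not change the argmin over the fixed constraint set $\{x : V(x)\subseteq V(y)\}$. The paper carries out the manipulation step by step inside the $\arg\min$, while you state the affine identity $\widetilde{\phi}_y = \tfrac{L}{2}\widehat{\phi}_y + C_y$ once and invoke argmin-invariance; this is a purely presentational difference.
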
\vspace{-3px}
The proof can be found in Appendix~\ref{sec:Proof 1}. To summarize, the V step of the PV method (Algorithm~\ref{alg:pv_alg}), i.e.,  $x = M_{V,\phi}(y),$
can be approximated via the ``linearized V step''
\begin{equation}
    x \eqdef M_{V,\phi}(y) \approx M_{V,\widehat{\phi}_{y}}(y) \eqdef \hat{x}.
    \label{eq:linearized_v_step}
\end{equation}\vspace{-15px}

Our next lemma says that the above approximation is in a certain sense natural reasonable provided that $\phi$ is $L$-smooth\footnote{It is possible to consider different class of functions instead; e.g., Lipschitz functions. In such a case, we would us a different approximation. For simplicity of exposition, we work with $L$-smooth functions.}  on $\mathbb{R}^d_{\leq c}$, i.e., provided that
\begin{equation}\label{eq:L-smoothness}\textstyle \phi(x) \leq \phi(y) + \abr{\nabla \phi(y),x-y} + \frac{L}{2}\norm{x-y}^2, \qquad \forall x,y \in \mathbb{R}^d_{\leq c}.\end{equation}

\begin{lemma}[Monotonicity] \label{lem:monotonicity} Let $y\in \mathbb{R}^d_{\leq c}$. If $\phi$ is $L$-smooth on $\mathbb{R}^d_{\leq c}$, then $\phi\rbr{M_{V,\phi}(y)} \leq \phi(\hat{x}) \leq \phi(y)$, where $\hat{x}$ is the point obtained from $y$ by the linearized V step \eqref{eq:linearized_v_step}. 
\end{lemma}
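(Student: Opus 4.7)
My plan is to split the chain $\phi\rbr{M_{V,\phi}(y)} \leq \phi(\hat{x}) \leq \phi(y)$ into its two inequalities and dispatch each separately. The first one, $\phi(M_{V,\phi}(y)) \leq \phi(\hat{x})$, is essentially definitional: by construction $\hat{x} = M_{V,\widehat{\phi}_y}(y)$ lies in the feasible set $\{x \in \R^d : V(x) \subseteq V(y)\}$, and $M_{V,\phi}(y)$ is by definition \eqref{eq:09u0fd9hfd} a minimizer of $\phi$ over that same set. So the first inequality drops out immediately, with no use of $L$-smoothness.

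The interesting inequality is $\phi(\hat{x}) \leq \phi(y)$, and here is where the $L$-smoothness assumption \eqref{eq:L-smoothness} is needed. The plan is a standard ``descent lemma''-style argument, adapted to our discrete constraint. First, apply $L$-smoothness at $x = \hat{x}$, $y = y$ to obtain $\phi(\hat{x}) \leq \widetilde{\phi}_y(\hat{x})$, where $\widetilde{\phi}_y$ is the quadratic upper model from \eqref{eq:09-98u98yfhd}; note that this step is legitimate because both $\hat{x}$ and $y$ belong to $\R^d_{\leq c}$, which is where smoothness is assumed. Second, invoke Lemma~\ref{lem:prox} to identify $\hat{x} = M_{V,\widehat{\phi}_y}(y) = M_{V,\widetilde{\phi}_y}(y)$, so $\hat{x}$ minimizes $\widetilde{\phi}_y$ over $\{x : V(x) \subseteq V(y)\}$. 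Since $y$ itself trivially belongs to this set (as $V(y)\subseteq V(y)$), we get $\widetilde{\phi}_y(\hat{x}) \leq \widetilde{\phi}_y(y)$. Third, evaluate $\widetilde{\phi}_y(y) = \phi(y) + \abr{\nabla \phi(y), 0} + \tfrac{L}{2}\|0\|^2 = \phi(y)$. Chaining these three facts gives $\phi(\hat{x}) \leq \widetilde{\phi}_y(\hat{x}) \leq \widetilde{\phi}_y(y) = \phi(y)$.

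There is no real obstacle here; the argument is essentially a careful bookkeeping of the two minimization problems (over $\phi$ versus over $\widehat{\phi}_y$) relative to the common feasible set. The only subtlety worth stating explicitly in the write-up is the remark that $y\in \R^d_{\leq c}$ is feasible for both problems, which is what allows us to use $y$ as a test point in the comparison inequality for $\widetilde{\phi}_y$; this justifies the use of the quadratic majorant globally on the discrete feasible set rather than just locally.
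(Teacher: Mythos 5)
Your proof is correct, and it is actually a cleaner and more careful version of what the paper's proof appears to be aiming at. Where you diverge from the paper is instructive. The paper's argument runs everything through the single chain labeled \eqref{eq:009u09yhfd8y9fd}, which asserts
\[
\phi(\hat{x}) \;\overset{?}{=}\; \phi\bigl(M_{V,\widetilde{\phi}_y}(y)\bigr) \;\overset{?}{=}\; \min_{x}\bigl\{\widetilde{\phi}_y(x) : V(x)\subseteq V(y)\bigr\},
\]
and then deduces both inequalities by lower- and upper-bounding that minimum. As written, the third equality applies $\phi$ to the $\widetilde{\phi}_y$-minimizer and equates the result with the minimum of $\widetilde{\phi}_y$; that step only makes sense if the left-hand function is $\widetilde{\phi}_y$, not $\phi$. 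If one corrects the glitch and replaces $\phi(\hat{x})$ with $\widetilde{\phi}_y(\hat{x})$, the paper's upper-bound chain becomes exactly your upper-bound chain ($\phi(\hat{x})\le\widetilde{\phi}_y(\hat{x})\le\widetilde{\phi}_y(y)=\phi(y)$, using $L$-smoothness for the first inequality). But the paper's lower-bound argument — which uses $\widetilde{\phi}_y(x)\ge\phi(x)$ pointwise to compare minima — no longer closes the loop after the correction, because $\phi(\hat{x})\le\widetilde{\phi}_y(\hat{x})$ runs in the wrong direction. Your decomposition avoids this entirely: you observe that $\phi(M_{V,\phi}(y))\le\phi(\hat{x})$ is purely a feasibility-versus-optimality statement (since $V(\hat{x})\subseteq V(y)$), requiring no smoothness at all, and you confine the $L$-smoothness hypothesis to the single place it is genuinely needed, namely the descent-lemma step in the upper bound. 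Your explicit remark that both $\hat{x}$ and $y$ lie in $\R^d_{\le c}$ — so the smoothness inequality \eqref{eq:L-smoothness} is indeed applicable at that pair of points — is also a subtlety worth keeping, since the lemma only assumes $L$-smoothness on the restricted set.
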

\vspace{-5px}

Indeed, the point $\hat{x}$ obtained via the linearized V step can not have a worse loss than the  previous point $y$. Of course, one hopes that the loss will strictly decrease so that the method makes progress. From a practical perspective, the key advantage of linearized  V step is that it can be performed much faster compared to the vanilla V step. The proof of Lemma~\ref{lem:monotonicity} can be found in Appendix~\ref{sec:monotonicity}.

Note that since $\widehat \phi_y(x)$ is separable (see \eqref{eq:separable}), each entry/weight of $x$ can be optimized independently of others. For scalar quantization, each individual problem can be solved in $\cO(\log_2(c))$ time using binary search in  sorted version of $V(y)$.
For vector quantization, there are specialized optimization procedures for efficiently minimizing the $L_2$ error (see Appendix~\ref{app:fast_l2_vq})

{\bf Key challenge.} The main caveat with linearized V step is that it may be impossible to make small gradient-based updates to low-bitwidth discrete weights. More specifically, in \eqref{eq:linearized_v_step}, one must update the discrete assignments to approximate $y^k - \frac{1}{L} \nabla \phi(y^k)$. However, for low-bit weights, the desired update $\frac{1}{L} \nabla \phi(y^k)$ can be smaller than the lowest possible increment to obtain a quantized vector. As a result, the optimal solution to \eqref{eq:linearized_v_step} is often  $y^k$ itself. In such a situation, the algorithm will get stuck on $y^k$, which is undesirable. This problem is especially pronounced in deep LLMs, where $L$ can be very large, or, from a practitioner's point of view, where one needs a small learning rate. In practice, as we explore in Section~\ref{sect:experiments_ablation}, the lowest learning rate where the algorithm makes \textit{any} updates at all is already too large for optimization, leading to divergence.

Many popular strategies for discrete fine-tuning can be seen as attempts to reconcile coarse low-precision weights with the need to make small updates. These include straight-through estimation, stochastic rounding, or adding regularizers that push the solution to~\eqref{eq:linearized_v_step} away from $y^k$. We review straight-through estimation in Appendix~\ref{app:STE} and stochastic rounding in Appendix~\ref{app:more_on_sr}.

\newpage
\setlength{\textfloatsep}{0.3cm}

\begin{minipage}{0.5\textwidth}
\vspace{-40px}\begin{algorithm}[H]
\caption{PV-Tuning: Optimization}
\label{alg:main}
\small
\begin{algorithmic}[1]
\REQUIRE initial parameters $x^0\in \R^d_c$ , \\
\quad\quad\; objective function $\phi:\R^d\to \R$ , \\
\quad\quad\; subspace size $\tau \in [d]$
\vspace{1px}
\FOR{$k = 0, \dots, K-1$}
    \STATE \color{blue} \(\triangleright\) \textbf{P step:} update $V(x)$ by backprop \hspace{-5px}\color{black}
    \STATE $y^k = \underset{{y \in \R^d_{\le c}}}{\arg\min} \{\phi(y) \,:\, P(y) \supseteq P(x^k) \}$
    
    \STATE \color{red} \(\triangleright\) \textbf{V step:} choose a subspace $\cS^k$ \& update $P(x)$ \hspace{-5px}\color{black}
    \STATE $\cS^k = \underset{1 \leq i \leq d} {\mathop{\mathrm{arg\,top}\,\tau}} \;\;\;|\nabla_i \phi (y^k)|$ \; \color{gray} \(\triangleright\) find $\tau$ largest \color{black}
    \STATE $\widehat{\phi}_{y,\cS^k}(x) \eqdef \norm{x - \rbr{y - \frac{1}{L_{S^k}} Z^k\rbr{\nabla \phi(y)}}}^2$
    \STATE $x^{k + 1} \!\!\!\!\!= \arg\min_{x} \left\{\widehat{\phi}_{y^k,\cS^k}(x) :\!\! V(x) {\subseteq} V(y^k) \!\right\}$
\ENDFOR
\end{algorithmic}
\end{algorithm}
\end{minipage}
\hfill
\begin{minipage}{0.51\textwidth}
\vspace{-40px}\begin{algorithm}[H]
\caption{PV-Tuning: Implementation, one step}
\label{alg:main_implementation}
\small
\begin{algorithmic}[1]
\REQUIRE quantized \texttt{model}, subspace size \texttt{tau}
\vspace{0.5px}
\STATE $\texttt{deq\_model := dequantize\_weights(model)}$
\FOR{$t = 1, \dots, T$ }
    \STATE $\texttt{loss} = \texttt{deq\_model(next\_batch()).loss}$
    \STATE $\texttt{loss.backward()}$  \;\;\; \color{gray} \(\triangleright\) accumulate gradients \color{black}
\ENDFOR \color{gray} \quad\quad\quad\quad\quad\quad\quad\quad for P and V steps\hspace{-5px}\color{black}
\vspace{4px}
\STATE \color{blue} \(\triangleright\) \textbf{P step:} update codebooks by backprop \hspace{-5px}\color{black}
\STATE \texttt{grad\_phi = deq\_model.weight.grad}
\STATE \texttt{grad\_codebooks = backprop(grad\_phi)}
\STATE \texttt{model.codebooks \!=\! adam(grad\_codebooks)}
\vspace{4px}
\STATE \color{red} \(\triangleright\) \textbf{V step:} choose a subspace $\texttt{s}$ and update codes\hspace{-5px}\color{black}
\STATE \texttt{update \!\!{=}\!\! adam(grad\_phi) \!\!\!{-}\!\! deq\_model.weight}
\STATE $\texttt{s = choose\_subspace(update, tau)}$
\STATE $\texttt{model.codes[s] = find\_nearest(update[s])}$
\end{algorithmic}
\end{algorithm}
\end{minipage}

\vspace{-7px}
\subsection{Linearized subspace V step}\label{sect:method_ours}\vspace{-5px}

Here we ask the following question: \textbf{Can we modify the PV method so as to force the V step to make a larger update?} In other words, we need an optimization algorithm that updates quantized weights either by a sufficiently large increment, or not at all.

A natural example of such an algorithm is coordinate descent (CD) \cite{luo1992convergence,PCDM}, or more generally, subspace descent \cite{SDA,SSD:Kozak2019}. Instead of updating all parameters by a small margin,  CD in each iteration chooses a single parameter, and makes a large update instead. This strategy can be generalized to updating more parameters at the same time, which leads to subspace descent methods.\footnote{Very closely related methods include block coordinate descent and compressed gradient descent with sparsification operators such as RandK or TopK \cite{Alistarh-EF2018,biased2020}.} The parameters to be updated can be chosen either greedily, (e.g., several $i\in [d]$ with the largest magnitude of the partial derivative $|\nabla_i \phi(\cdot)|$), or at random, or through a variety of other means.

Let $\cS^k \subset [d]$ be the set of parameters/weights/coordinates we wish to update at iteration $k$. We choose $|\cS^k|=\tau \ll d$. Let $Z^k:\R^d \to \R^d$ be the linear mapping defined as follows: $\rbr{Z^k(x)}_i = x_i$ if $i\in \cS^k$ and $\rbr{Z^k(x)}_i = 0$ if $i\notin \cS^k$. We now formulate the linearized {\em subspace} V step:
\vspace{-3px}
$$x^+\eqdef M_{V,\widehat{\phi}_{y,\cS^k}}(y)\eqdef \arg\min_{x \in \R^d} \left\{\widehat{\phi}_{y,\cS^k}(x) \;:\; V(x) \subseteq V(y) \right\},$$
\vspace{-5px}
\begin{equation}\label{eq:separable} \text{where}\quad \textstyle \widehat{\phi}_{y,\cS^k}(x) \eqdef \norm{x - \rbr{y - \frac{1}{L_{\cS^k}} Z^k\rbr{\nabla \phi(y)}}}^2,\end{equation}
and $L_{S^k}>0$ is a smoothness parameter of $\phi$ associated with the subspace spanned by the parameters belonging to $\cS^k$. This detail is important because $L_{\cS^k} \ll L$ when $\tau\ll d$. \textit{When estimating Lipschitz constants for real LLMs, we found that it is lower by at least one order of magnitude}, making it possible to train with sufficiently large step sizes (see details and $L_{\cS^k}$ estimates in Appendix~\ref{app:subspace_smoothness}). 

Note that, necessarily, $x^+_i = y_i$ for $i\notin \cS^k$. The remaining $\tau$ entries of $x^+$ can be identified exactly by searching a discrete space of size $|V(y)|^\tau$, which is feasible if $c = \cO(1)$ and $\tau=\cO(1)$, for example.



In practice, it means that \emph{the algorithm can apply large updates to quantized LLM weights, with the caveat that should only update a fraction of them at a time}. This allows us to perform the linearized V step with sufficiently large ``learning rate'' to make non-trivial (i.e., $x^{k+1} \neq y^k$) improvements to quantized weights even without straight-through estimation or stochastic rounding.

We formulate the full procedure in Algorithm~\ref{alg:main}. The algorithm performs the P step by directly optimizing $V(x)$ (i.e., codebooks) by backprop as described in Section~\ref{sect:method_problem}. For the V step, the algorithm greedily chooses a subset of $\tau$ quantized weights for update, then updates them using Eq.~\eqref{eq:separable}.  The ${\mathop{\mathrm{arg\,top}\,\tau}}$ operator finds $\tau$ indices with the largest absolute gradient values and builds a subspace of $\mathbb{R}^d_{\le c}$ where only these values can be changed, and the rest must be equal to $y^k$.



\vspace{-5px}\subsection{Implementation details}\label{sect:method_details}
\vspace{-5px}



To speed up convergence, we use adaptive learning rates for both P and V steps. In Eq.~\ref{eq:separable}, we replace $\nabla \phi(y)$ with a single Adam~\cite{kingma2014adam} update, as depicted in Algorithm~\ref{alg:main_implementation}. In preliminary experiments, we found that this results in a significant convergence speedup. 
When choosing the subspace $\cS^k$, we select weights based not on $|\nabla_i \phi (y)|$, but on the magnitude of Adam update for that weight. For simplicity, we greedily choose the $\tau$ weights with the largest update norm within each weight matrix.

This could be further improved through better techniques for choosing $\cS^k$ explored in Appendix~\ref{app:experiments_smallscale}. 
We also found that, despite the fact that PV-tuning by itself outperforms straight-through estimation, we could achieve slightly better accuracy by combining PV-tuning with straight-through estimation. We explore this in more detail in Section~\ref{sect:experiments_ablation}).

We describe our approach for preparing the calibration data in Appendix~\ref{app:data_matters}. We found that the preprocessing used in several recent PTQ works introduce a small bias when sampling the calibration data, leading to somewhat worse fine-tuning accuracy. For fairness, we always compare representations (Section~\ref{sect:experiments_representations}) and algorithms (Section~\ref{sect:experiments_ablation}) using the same pre-processing.

\textbf{Fine-tuning efficiency.} The most compute-intensive part of PV tuning is computing the gradients $\nabla \phi(\cdot)$, which is done through repeated forward and backward passes on an LLM. To reduce the number of gradient accumulations, we reuse gradients for P and V steps within one iteration.
We use mixed precision, gradient checkpointing and batch accumulation to train more efficiently; for larger LLMs such as \textsc{Llama 3} 70B we also use sharding and optimizer offloading (see Appendix~\ref{app:engineering}). Our code can train 7B LLMs on a single GPU, while larger ones (e.g. 70B) fit into a single machine with 8${\times}$A100. In terms of wall-clock time, PV-tuning takes up to 1.5${\times}$ longer than the fine-tuning procedure of~\cite{tseng2024quipsharp} and requires additional memory in order to hold $\nabla \phi(x)$.


\vspace{-10px}\section{Experiments}\label{sect:experiments}
\vspace{-5px}\subsection{Evaluating quantized representations with finetuning}\label{sect:experiments_representations}
\vspace{-5px}
Before evaluating PV-tuning, we need to choose the quantized representation to be fine-tuned.  We therefore compare popular weight representations from recent works on LLM quantization (see Section~\ref{sect:related_ptq}). To better isolate the effect of the weight representation, we evaluate them in three configurations: i) when quantizing a single LLM layer, in terms of MSE, ii) full model quantization in terms of perplexity without finetuning and iii) with finetuning.

We compare several recently proposed quantized representations (see details in Appendix~\ref{app:exp_details_representations}):
\begin{enumerate}[leftmargin=20px]\vspace{-5px}
    \vspace{-2px}\item \textbf{GPTQ:} scalar uniform quantization with channel-wise and block-wise scales~\cite{frantar2022gptq},
    \vspace{-2px}\item \textbf{SpQR:} an extension of block-wise GPTQ with learned sparse outliers~\cite{dettmers2023spqr},
    \vspace{-2px}\item \textbf{VQ:} basic vector quantization with a single codebook ~\cite{van2024gptvq} with multi-step training.
    \vspace{-2px}\item \textbf{AQLM:} additive vector quantization with multiple learned codebooks~\cite{egiazarian2024extreme},
    \vspace{-2px}\item \textbf{QuIP\#}: vector quantization with lattices and incoherence processing~\cite{tseng2024quipsharp},
    \vspace{-2px}\item \textbf{VQ/AQ + outliers:} vector/additive quantization with sparse outliers via pruning~\cite{wanda,admmpruning},
    \vspace{-2px}\item \textbf{VQ/AQ + lowrank:} vector/additive quantization with Low-Rank Compensation (LoRC)~\cite{zeroquantv2},
\end{enumerate}

\begin{figure}[b!]
    \centering
    \hspace{-7px}\includegraphics[width=0.33\linewidth,height=118px]{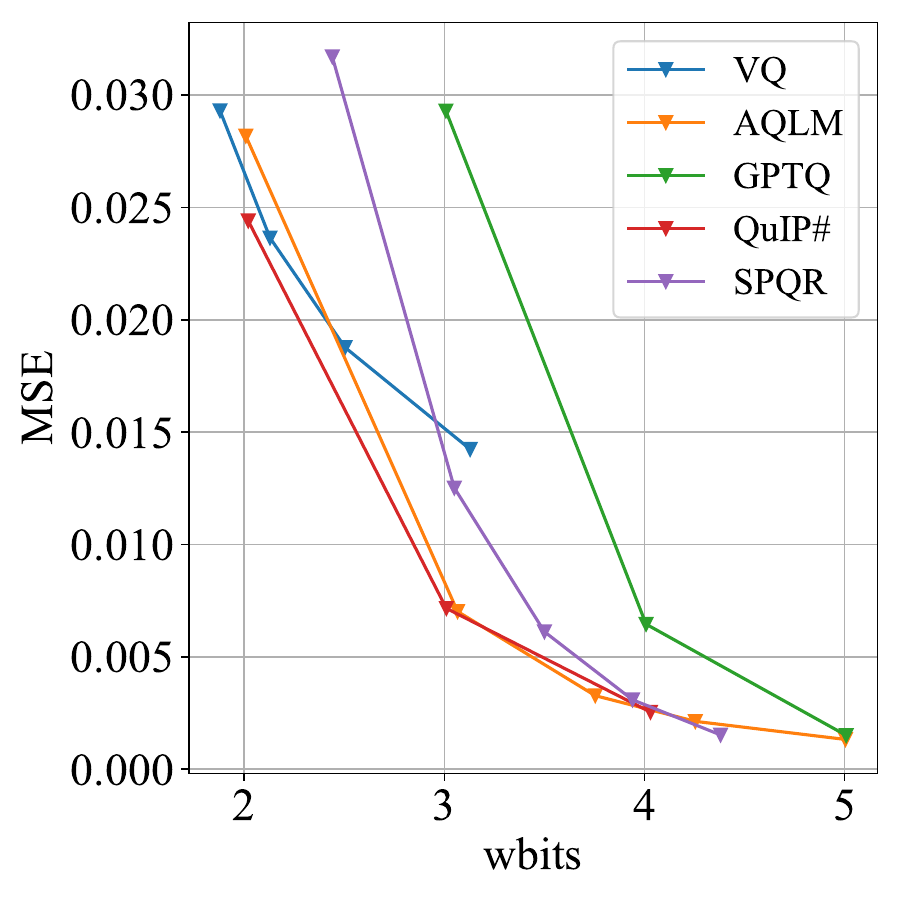}
    \includegraphics[width=0.33\linewidth,height=120px]{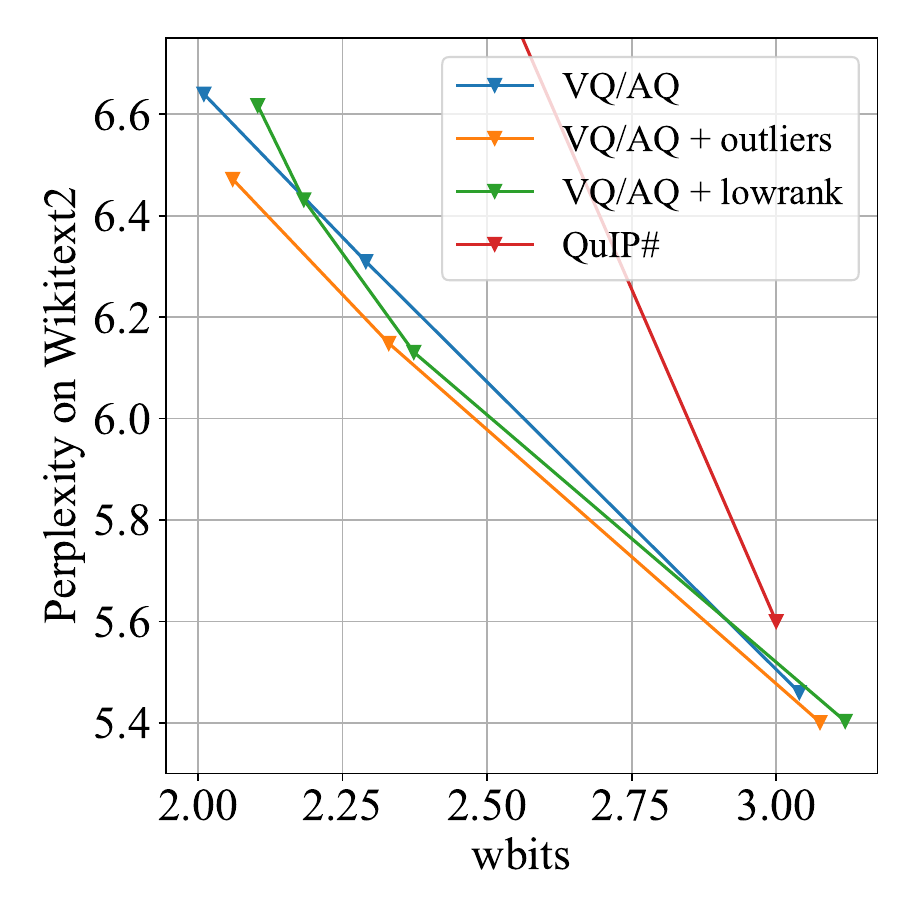}
    \includegraphics[width=0.32\linewidth,height=118px]{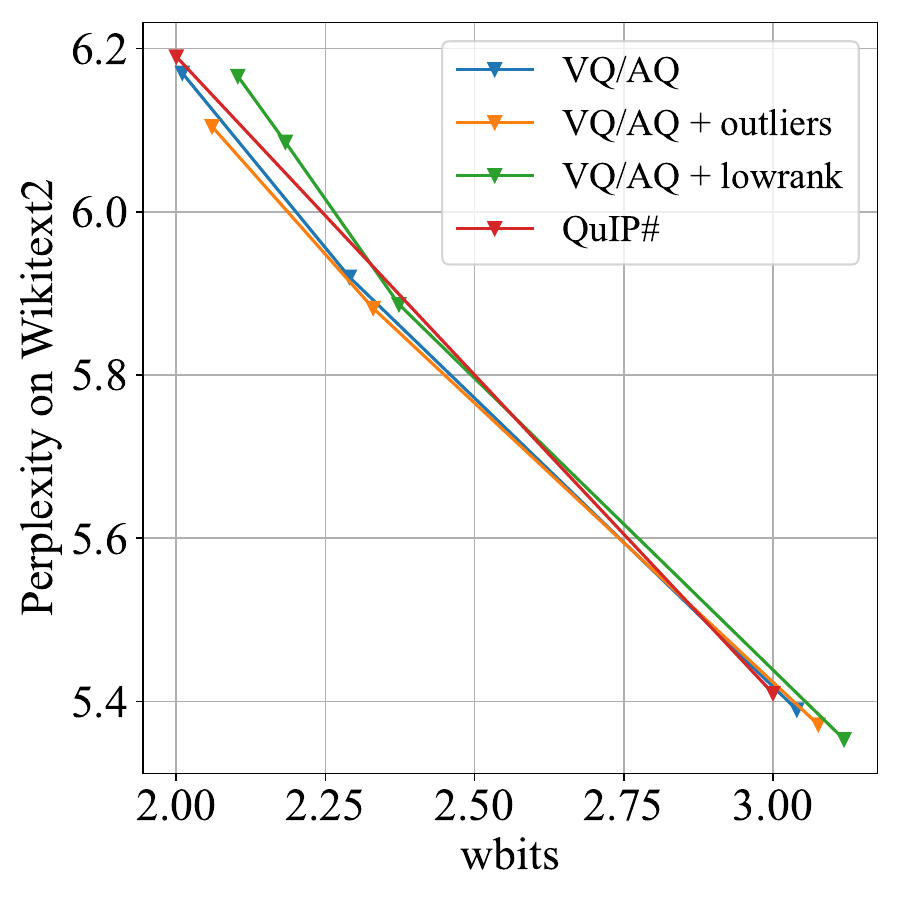}

    \vspace{-10px}
    \caption{\textbf{(left)} L2 errors for 17th layer of \textsc{Llama 2} 7B with different representations. Full model perplexity on WikiText-2 is reported without finetuning \textbf{(middle)} and with fine-tuning \textbf{(right)}.}\vspace{-5px}
    \label{fig:exp_representations}
\end{figure}

We run all three experiments on \textsc{Llama 2} 7B model~\cite{touvron2023llama}, calibrating on the RedPajama~\cite{together2023redpajama} dataset that best approximates the original pre-training data. When evaluating single layer errors, we report the L2 error in attention query projection outputs of a fixed transformer block, with other blocks exhibiting similar behavior. For full model evaluation, we report quantized model perplexity on WikiText-2~\cite{wikitext103} dataset. We use the same data splits and preprocessing as in most recent PTQ works~\cite{frantar2022gptq,lin2023awq,dettmers2023spqr,tseng2024quip,egiazarian2024extreme,tseng2024quipsharp}, including the biased preprocessing step that we mentioned  in~\ref{sect:method_details}. For fine-tuning, we train continuous parameters only, using the approach  from~\cite{tseng2024quipsharp}. To compare these diverse representations, we evaluate their quantization errors as a function of average number of bits per parameter. To get a diverse set of bits per parameter, we vary the hyperparameters such as wbits, block size, codebook and group size for vector quantization and the frequency of outliers. 

Figure~\ref{fig:exp_representations} summarizes our findings. Overall, vector quantization methods (VQ, QuIP and AQLM) outperform their scalar counterparts. Outliers and low-rank compensation both reduce error, but this improvement comes at the cost of extra bits per parameter. Interestingly, \textit{the improvement from outliers is significantly smaller when both methods have access to fine-tuning}.
We attribute this to the fact fine-tuning can compensate the error in ``important'' weights by adjusting the rest of the model.

\vspace{-1px}Our main takeaway is that for sub 2 bits per parameter, the vector quantization (VQ) representation can achieve near-optimal quantization accuracy, whether or not it uses outliers, LoRC or incoherence processing. Naturally, this does not reduce the value of prior works since they were designed for different scenarios, typically with a higher number of bits per parameter.

\begin{table*}[b!]

\small
\centering
\setlength\tabcolsep{2.05pt}
\renewcommand{\arraystretch}{1.05}
  \caption{Comparing different fine-tuning strategies for VQ, GPTQ and AQLM on \textsc{Llama 2} 7B in terms of perplexity on WikiText-2, C4 and average zero-shot accuracy on tasks from Section~\ref{sect:experiments_maintable}.}\label{tab:ablation}
\vspace{-3px}
\begin{tabular}{l|ccc|ccc|ccc}
 \toprule

  \multirow{2}{*}{\bf{Fine-tuning Method}} & \multicolumn{3}{c|}{\bf{GPTQ} \tiny{2.14 bit/w} } & \multicolumn{3}{c|}{\bf{VQ}, \tiny{1.58 bit/w} } & \multicolumn{3}{c}{\bf{AQLM}, \tiny{2.01 bit/w} }\\
  
  & \bf{Wiki2$\downarrow$} & \;\bf{C4$\downarrow$} & \bf{Acc.$\uparrow$} & \bf{Wiki2$\downarrow$} & \;\bf{C4$\downarrow$} & \bf{Acc.$\uparrow$} & \bf{Wiki2$\downarrow$} & \;\bf{C4$\downarrow$} & \bf{Acc.$\uparrow$} \\
  
  \midrule
  
  Calibration only (no global fine-tuning)\; 
  & 3290 & 4125 & 29.0 & 20.26 & 20.09 & 43.42 & 7.38 & 9.34 & 53.2 \\
  \cmidrule{1-10}
  Continuous params only~\cite{tseng2024quipsharp,egiazarian2024extreme} 
  & 16.77 & 17.53 & 46.27 & 8.17 & 10.99 & 52.14 & 6.69 & 8.77 & 56.57 \\
  Naive Linearized PV (no subspace)
  & 16.73 & 17.48 & 47.68 & 8.19 & 10.94 & 52.08 & 6.68 & 8.75 & 56.51 \\
  Stochastic Rounding~\cite{polino2018model} (tuned)
  & 11.97 & 13.07 & 49.79 & 8.02 & 10.64 & 52.31 & 6.56 & 8.39 & 56.68 \\
  Straight Through Estimation~\cite{onebit}
  & 8.79 & 11.04 & 50.61 & 7.76 & 10.26 & 52.58 & 6.41 & 8.63 & 57.04 \\
  Subspace Linearized PV (ours, $\tau{=}0.01$)
  & 8.49 & \bf{10.78} & \bf{52.17} & 7.38 & 9.47 & 53.36 & 6.13 & 8.35 & 57.81 \\
  Subspace Linearized PV{+}STE ($\tau{=}0.01$)
  & \bf{8.43} & 10.82 & 51.90 & \bf{7.32} & \bf{9.35} & \bf{55.22} & \bf{5.90} & \bf{7.43} & \bf{58.19} \\
\bottomrule
\end{tabular}\vspace{-10px}
\end{table*}

\vspace{-7px}\subsection{Evaluating Fine-tuning Algorithms}\label{sect:experiments_ablation}
\vspace{-5px}

\vspace{-1px}Next, we compare different fine-tuning strategies and ablate our PV-tuning protocol. We design our protocol to be representation-agnostic, i.e. compatible with different quantized representations. To showcase this, we pick three methods from the previous section: GPTQ, VQ and AQLM.

\vspace{-1px}These methods differ not only in their weight representations, but also in how they search for the optimal codes. Namely, GPTQ can scale the target weight and round it to nearest 2-bit integer. In turn, VQ quantizes weights as a group and must find the nearest vector from its codebook, and AQLM uses a multi-step beam search procedure to choose the best combination of codes from both codebooks. Our PV-Tuning implementation uses these search algorithms during the subspace linearized V step (\texttt{find\_nearest} in Alg.~\ref{alg:main_implementation}). We describe the full PV configuration for each method in Appendix~\ref{app:exp_details_finetuing}.

\vspace{-1px}We compare PV tuning against several popular fine-tuning regimens found in the literature. Our first baseline is fine-tuning only continuous parameters, e.g., codebooks or input/output embeddings~\cite{tseng2024quipsharp,vanholder2016efficient}. The second baseline is training with Straight Through Estimation (STE)~\cite{wang2023bitnet,onebit}. We also test stochastic rounding as described in Appendix~\ref{app:more_on_sr}. Finally, we evaluate PV tuning combined with STE, but otherwise the same configuration. We set the subspace size $\tau$ equal to the number of weights such that the update satisfies $\|x^{k+1} - x^k\| / \|x^k\| \leq 0.01$, also known as known as trust ratio~\cite{lamb}. The results in Table~\ref{tab:ablation} show that PV-Tuning consistently finds better quantized models, with STE coming consistently second. We explore this further by combining subspace updates with STE, which leads to slightly better perplexity and accuracy in most (but not all) setups.

\vspace{-7px}\subsection{Large-scale Evaluation \& Discussion}\label{sect:experiments_maintable}\vspace{-5px}

Finally, we evaluate the resulting PV algorithm with a vector quantization backbone and KL objective on a range of popular LLM models. For this section, our goal is to evaluate our approach holistically for different models and target bit-widths, comparing against the best known baselines in common settings. To that end, we evaluate on \textsc{Llama 2 \& 3}~\cite{touvron2023llama}, \textsc{Mistral 7B}~\cite{jiang2023mistral} and \textsc{Phi-3} Mini-4k-Instruct~\cite{abdin2024phi3} at 1--2.5 bits per parameter.

\vspace{-1px}We report perplexity on WikiText-2~\cite{wikitext103} and C4~\cite{C4} validation sets, zero-shot accuracy on WinoGrande~\cite{DBLP:journals/cacm/winogrande2021}, PiQA~\cite{tata2003piqa}, HellaSwag~\cite{DBLP:conf/acl/hellaswag2019}, ARC-easy and ARC-challenge~\cite{arc_allenai} via the LM Eval Harness~\cite{eval-harness}. We follow the exact evaluation setup from GPTQ~\cite{frantar2022gptq}.
We compare against QuIP~\cite{tseng2024quip}, BiLLM~\cite{huang2024billm}, PB-LLM~\cite{shang2023pbllm}, DB-LLM~\cite{chen2024dbllm}, AQLM~\cite{egiazarian2024extreme}, OneBit~\cite{onebit}, QuIP\#~\cite{tseng2024quipsharp}, the latter three using fine-tuning. For \textsc{Llama 3}, we use baselines from~\cite{huang2024good} and re-evaluate perplexity in our setup.

\vspace{-1px}Table~\ref{tab:aggregated} summarizes our findings: \textbf{PV-tuning with vector quantization outperforms all known methods for 1- and 2-bit per weight}. The closest competitors on \textsc{Llama 2} are QuIP\#, AQLM and OneBit, all of which use fine-tuning. The improvements on \textsc{Llama 3} are also remarkable, as this model is notoriously hard to compress~\cite{huang2024good}. We report additional evaluations in Appendix~\ref{app:exp_main_moredata}.

\textbf{Pareto-optimality.} A key practical question concerns obtaining  optimal quality for the target model size, where a smaller model compressed to 3-4 bits often dominates a larger model compressed to 1-bit. The best known Pareto-optimal bit-width for Llama 2 is 2.5~\cite{egiazarian2024extreme}: compressing a larger model to less than 2.5 bits per weight is inferior to a smaller model quantized to the same total number of bytes. From this perspective, \textbf{PV-tuning pushes the Pareto-optimal frontier for \textsc{Llama 2} to 2.0 bits}. This is easiest to see in Table~\ref{tab:llama2bit}: a 2-bit 13B model outperforms any 7B quantization and is comparable with the 16-bit 7B model. The same holds for the 2-bit 70B model.

\textbf{Fine-tuning efficiency.} One limitation of our algorithm is that it requires more compute and memory during the fine-tuning procedure. 
The 7B models can be fine-tuned on a single GPU, our 70B runs require a server with $8{\times}A100$ or rely on RAM offloading. PV-Tuning shares this drawback with prior methods based on STE~\cite{egiazarian2024extreme, tseng2024quipsharp}, as both methods need gradients w.r.t. dequantized weights. Our longest training run took 2 days on 8 GPUs to outperform all baselines and 8 days to fully converge.

\textbf{Inference speed.} One advantage of PV-tuning is that it does not change the underlying compressed representation, so we can just simply existing high-performance inference kernels, but providing higher accuracy. Specifically, VQ+PV can reuse efficient kernels from~\cite{egiazarian2024extreme,tseng2024quipsharp}, while GPTQ+PV can use ExLlamaV2 kernels~\cite{exllamav2}. We illustrate speedups in Appendix~\ref{app:inference_speed}.

\begin{table*}[t!]

\vspace{-25px}
\small
\centering
\small
\setlength\tabcolsep{1.95pt}
\renewcommand{\arraystretch}{1.1}
  \caption{Quantized model perplexity on \textbf{WikiText2$\downarrow$}~\citep{wikitext103} \& \textbf{C4$\downarrow$}~\citep{C4} and the \textbf{Average}$\uparrow$ accuracy on 5 zero-shot tasks~\cite{eval-harness} for various models and bitwidths. Arrows\,\,$\uparrow{/}\downarrow$ mean higher / lower is better.}\label{tab:aggregated}%
\vspace{-3px}
\begin{minipage}{0.49\textwidth}
\begin{tabular}{lcc|ccc}
 \toprule
  \bf{Size} & \bf{Method} & \bf{Avg bits} & \bf{Wiki2$\downarrow$} & \bf{C4$\downarrow$} & \bf{Average$\uparrow$}\\
  \midrule
  \multicolumn{6}{c}{\textsc{Llama 2} model family} \\
  \midrule
  
  \multirow{8}{*}{7B}
  & -- & 16 & 5.12 & 6.63 & 64.80\\
  & BiLLM & 1.08 & 32.48 & 40.52 & 41.68 \\
  & OneBit & 1.01 & 9.73 & 11.11 & 50.06 \\
  & PV-Tuning & 1.02 & \bf{8.28} & \bf{10.37} & \bf{50.66} \\
  \cmidrule{2-6}
  & AQLM & 2.02 & 6.64 & 8.56 & 56.47 \\
  & QuIP\# & 2.01 & 6.19 & 8.16 & 57.51 \\
  & DB-LLM & 2.01 & 7.23 & 9.62 & 55.12 \\
  & PV-Tuning & 2.02 & \bf{5.84} & \bf{7.62} & \bf{61.35} \\
  \midrule
  \multirow{6}{*}{13B}
  & -- & 16 & 4.57 & 6.05 & 67.82\\
  & AQLM & 1.97 & 5.65 & 7.51 & 60.59 \\
  & QuIP\# & 2.01 & 5.35 & 7.20 & 61.45 \\
  & DB-LLM & 2.01 & 6.19 & 8.38 & 59.41 \\
  & PV-Tuning & 1.97 & \bf{5.12} & \bf{6.83} & \bf{64.92} \\
  & PV-Tuning & 2.19 & \bf{5.05} & \bf{6.74} & \bf{66.05} \\
  \midrule
  \multirow{6}{*}{70B}
  & -- & 16 & 3.12 & 4.97 & 72.40\\
  \vspace{2.0px} & AQLM & 2.07 & 3.94 & 5.72 & 68.75 \\
  \vspace{0.5px}& QuIP\# & 2.01 & 3.91 & 5.71 & 68.60 \\
  \vspace{0.5px}& DB-LLM & 2.01 & 4.64 & 6.77 & 65.83 \\
  \vspace{1px}& PV-Tuning & 2.07 & \bf{3.78} & \bf{5.56} & \bf{70.72} \\
  \vspace{0.5px}& PV-Tuning & 1.14 & 5.52 & 7.50 & 64.58 \\  
\bottomrule
\end{tabular}
\end{minipage}
\begin{minipage}{0.49\textwidth}
\begin{tabular}{lcc|ccc}
 \toprule
  \bf{Size} & \bf{Method} & \bf{Avg bits} & \bf{Wiki2$\downarrow$} & \bf{C4$\downarrow$} & \bf{Average$\uparrow$}\\
  \midrule
  \multicolumn{6}{c}{\textsc{Llama 3} model family} \\
  \midrule
  
  \multirow{6}{*}{8B}
  & -- & 16 & 5.54 & 7.10 & 68.61 \\
  & BiLLM & 1.1 & 28.8 & 257 & 37.90 \\
  & PV-Tuning & 1.01 & \bf{11.17} & \bf{11.63} & \bf{50.01} \\
  \cmidrule{2-6}
  & QuIP & 2.01 & 76.95 & 98.47 & 36.8 \\
  & DB-LLM & 2.01 & 12.77 & 14.82 & 51.8 \\
  & PV-Tuning & 2.01 & \bf{6.99} & \bf{8.29} & \bf{64.36} \\
  \midrule
  \multirow{6}{*}{70B}
  & -- & 16 & 2.59 & 5.78 & 75.37 \\
  & BiLLM & 1.1 & 15.26 & 65.07 & 44.2 \\
  & PV-Tuning & 1.01 & \bf{8.67} & \bf{9.68} & \bf{51.47} \\
  \cmidrule{2-6}
  & QuIP & 2.00 & 11.63  & 18.54 & 48.71\\
  & PB-LLM & 2.00 & 10.33 & 28.89 & 46.04 \\
  & PV-Tuning & 2.07 & \bf{4.57} & \bf{6.56} & \bf{70.38} \\
  \midrule
  \multicolumn{6}{c}{\textsc{Mistral} 7B v0.1 (A) and \textsc{Phi} 3 Mini-4k-Instruct (B)} \\
  \midrule  
  \multirow{3}{*}{$\underset{(A)}{\text{7B}}$}
  & -- & 16 & 4.78 & 5.71 & 69.38 \\
  & QuIP\# & 2.01 & 6.02 & 6.84 & 62.20 \\
  & PV-Tuning & 2.01 & \bf{5.29} & \bf{6.17} & \bf{66.32} \\
  \midrule
  \multirow{3}{*}{$\underset{(B)}{\text{3.8B}}$}
  & -- & 16 & 5.83 & 9.35 & 70.5\\
  & AQLM & 2.03 & 8.85 & 12.19 & 60.4 \\
  & PV-Tuning & 2.03 & \bf{6.88} & \bf{10.08} & \bf{65.70} \\
\bottomrule
\end{tabular}
\end{minipage}

\end{table*}

\vspace{-1em}
\section{Conclusions}
\vspace{-0.6em}

\textbf{Limitations.}
We focus our effort on evaluating PV-Tuning with multiple setups and models, but we spent relatively little effort tuning our algorithm for each specific setup. For instance, we always use constant learning rate and $\tau$ with no schedule, and always train on the same data. While this shows robustness of PV-Tuning, it also means that our results may be improved with better hyperparameters. Furthermore, the algorithm could achieve better accuracy by simply training longer and on more data.

\textbf{Future work.} This work opens several new research directions. The first is about how to choose $\cS^k$: while we found that a greedy strategy works in practice, there may be fundamentally better ways. Another direction is applying PV-Tuning to other quantization niches: our evaluation focuses on extreme weight-only quantization, but the proposed algorithm can be used in weight + activation setting or quantize vision models. Overall, PV-Tuning shows how an insight from optimization theory can improve LLM quantization and we are excited to see how this develops in future research.  
\newpage

\section{Acknowledgements}

Authors would like to thank Vage Egiazarian, Andrei Panferov and Ruslan Svirschevski for their help and advice on AQLM codebase and running large-scale experiments. We also thank Philip Zmushko and Artem Fedorov for helpful discussions during the early stages of our research. Finally, we thank the open-source contributors from llamacpp\footnote{\url{https://github.com/ggerganov/llama.cpp}} and the LocalLlama\footnote{\url{https://reddit.com/r/LocalLaMA}} community for discussions and inspirations on practical use cases of quantized language models.

\clearpage





\clearpage
\bibliography{references}
\bibliographystyle{abbrv}

\clearpage
\part*{Appendix}
\appendix

\tableofcontents

\clearpage

\section{Proofs}

\subsection{Proof of Theorem~\ref{th:PV_theorem}}
\label{sec:PV_theorem}

{\bf Part (i):} First, by assumption, we know that $x^0\in \mathbb{R}^d_{\leq c}$. Assume that $x^k\in \mathbb{R}^d_{\leq c}$ for some $k\geq 0$. Since
 $P(y^k)\supseteq P(x^k)$, this implies that $y^k\in \mathbb{R}^d_{\leq c}$. Next, since $V(x^{k+1}) \subseteq V(y^k)$, we conclude that  $x^{k+1}\in \mathbb{R}^d_{\leq c}$. The claim now follows by induction.

{\bf Part (ii):} Since $$y^k = \arg\min_{y \in \R^d} \{\phi(y) \;:\; P(y) \supseteq P(x^k)\}$$ and because $y=x^k$ satisfies the constraint $P(y) \supseteq P(x^k)$, we conclude that $\phi(y^k) \leq \phi(x^k)$. Further, since $$x^{k+1} = \arg\min_{x \in \R^d} \{\phi(x) \;:\; V(x) \subseteq V(y^k)\}$$ and because $x=y^k$ satisfies the constraint $V(y) \subseteq V(y^k)$, we conclude that $\phi(x^{k+1}) \leq \phi(y^k)$. In summary, $$\phi(x^{k+1}) \leq \phi(y^k) \leq \phi(x^k).$$

{\bf Part (iii):} In view of part (ii), the sequence $\{\phi(x^k)\}_{k=0}^{\infty}$ is non-increasing. By assumption, it is bounded below. Hence, it converges to its infimum: $$\lim_{k\to \infty}\phi(x^k) = \inf_{k \in \{0,1,\dots\}} \phi(x^k).$$

\subsection{Proof of Lemma~\ref{lem:prox}} \label{sec:Proof 1}

\begin{eqnarray*}M_{V,\widetilde{\phi}_y}(y) &\eqdef & \arg\min_{x \in \R^d_{\le c}} \left\{\widetilde{\phi}_y(x) \;:\; V(x) \subseteq V(y) \right\} \\
&=& \arg\min_{x \in \R^d_{\le c}} \left\{ \phi(y) + \abr{\nabla \phi(y),x-y} + \frac{L}{2}\norm{x-y}^2 \;:\; V(x) \subseteq V(y) \right\} \\
&=& \arg\min_{x \in \R^d_{\le c}} \left\{  \abr{\nabla \phi(y),x} + \frac{L}{2}\norm{x-y}^2 \;:\; V(x) \subseteq V(y) \right\} \\
&=& \arg\min_{x \in \R^d_{\le c}} \left\{  2\abr{ \frac{1}{L}\nabla \phi(y),x} + \norm{x-y}^2 \;:\; V(x) \subseteq V(y) \right\} \\
&=& \arg\min_{x \in \R^d_{\le c}} \left\{   \norm{x- \rbr{y - \frac{1}{L} \nabla \phi(y)}}^2 \;:\; V(x) \subseteq V(y) \right\} \\
&=& \arg\min_{x \in \R^d_{\le c}} \left\{  \widehat{\phi}_y(x) \;:\; V(x) \subseteq V(y) \right\}  \\
&=& M_{V,\widehat{\phi}_y}(y) .
 \end{eqnarray*}

 \subsection{Proof of Lemma~\ref{lem:monotonicity}} \label{sec:monotonicity}

First, note that
 \begin{eqnarray}
 \phi (\hat{x}) &\overset{\eqref{eq:separable}}{=}& \phi \rbr{ M_{V,\widehat{\phi}_{y}}(y) } \notag \\
 &\overset{\text{Lemma}~\ref{lem:prox}}{=}&\phi \rbr{ M_{V,\widetilde{\phi}_{y}}(y) } \notag \\
 &\overset{\eqref{eq:09u0fd9hfd}}{=}& \min_{x \in \R^d} \{\widetilde{\phi}_y(x) \;:\; V(x) \subseteq V(y) \} \notag \\
 &\overset{\eqref{eq:09-98u98yfhd}}{=}&  
\min_{x \in \R^d} \left\{ \phi(y) + \abr{\nabla \phi(y),x-y} + \frac{L}{2}\norm{x-y}^2 \;:\; V(x) \subseteq V(y) \right\} . \label{eq:009u09yhfd8y9fd}
\end{eqnarray}

Since $y\in \mathbb{R}^d_{\leq c}$, any $x\in \R^d$ satisfying $V(x)\subseteq V(y)$ must also satisfy $x\in \mathbb{R}^d_{\leq c}$. So, in view of $L$-smoothness of $\phi$ on $\mathbb{R}^d_{\leq c}$, we can bound the last expression in \eqref{eq:009u09yhfd8y9fd} from below via
 \begin{eqnarray*}
 \phi (\hat{x})  &\overset{\eqref{eq:009u09yhfd8y9fd}+\eqref{eq:L-smoothness}}{\geq}& \min_{x \in \R^d} \left\{ \phi(x)  \;:\; V(x) \subseteq V(y) \right\} \\
&\overset{\eqref{eq:09u0fd9hfd}}{=}& \phi\rbr{M_{V,\phi}(y)}.
\end{eqnarray*}

Finally, since $x=y$ satisfies the constraint $ V(x) \subseteq V(y)$, we can upper bound the same expression via 
 \begin{eqnarray*}
 \phi (\hat{x})  &\overset{\eqref{eq:009u09yhfd8y9fd}}{\leq}&   \phi(y) + \abr{\nabla \phi(y),y-y} + \frac{L}{2}\norm{y-y}^2 \\
&=&\phi(y).
\end{eqnarray*}

\section{Approximate PV Algorithm} \label{sec:PV-approx}
  
We now introduce the pseudocode of an approximate PV meta-algorithm; the idea is to replace the P and V steps with some approximate computations to be defined later.

\begin{algorithm}[H]
    \caption{Approximate PV Algorithm}
    \label{alg:approx_pv_alg}
    \begin{algorithmic}[1]
    \STATE \textbf{Parameters:} starting point $x^0 \in \R^d_{\leq c}$
    \FOR{$k = 0, 1, \dots$}
    \STATE $y^k \approx M_{P,\phi}(x^k)$ 
    \STATE $x^{k+1} \approx M_{V,\phi}(y^k)$ 	(for example, we can use the method from Section~\ref{sec:V1} or the method from Section~\ref{sec:V2})
    \ENDFOR
    \end{algorithmic}
\end{algorithm}

Next, we describe two new approximations of the V step.

 \subsection{Approximate V step, variant 1 (non-accelerated)} \label{sec:V1}
 
 We now describe an algorithm computing an approximation to $M_{V,\phi}(y)$:
\begin{enumerate}
	\item Start with some $y\in \R^d_c$ and choose sufficiently large $L>0$, number of iterations $T$
	\item Set $z^0 = y$
	\item For $t = 0, \dots, T-1$ iterate:
		\begin{enumerate}
		\item [(i)] Define $\widehat{\phi}_{z^t}(\cdot) \eqdef \norm{\cdot - \rbr{z^t - \frac{1}{L} \nabla \phi(z^t)}}^2$ 
		\item [(ii)] Set $ z^{t+1} = M_{V,\widehat{\phi}_{z^t}}(z^t) $ 	
		\end{enumerate}
	\item Output: $z^{T}$
\end{enumerate}

The method is constructed so that $z^{T} \approx M_{V,\phi}(y)$. If use this subroutine with $T=1$ to approximate the V step in the PV method, we recover what we earlier called the linearized PV method. Choosing sufficiently large $T\geq 2$ may be advantageous.

\subsection{Approximate V step, variant 2 (accelerated)} \label{sec:V2}
 
 We now describe a different algorithm for computing an approximation to $M_{V,\phi}(y)$:
\begin{enumerate}
	\item Start with some $y\in \R^d_c$ and choose sufficiently large $L>0$, number of iterations $T$
	\item Choose a suitable decreasing sequence of positive scalars $\{\alpha_t\}$, with $\alpha_0=1$ and $\lim_{t\to \infty} \alpha_t = 0$	
	\item Set $z^0 = y$
	\item For $t = 0,\dots, T-1$ iterate:
		\begin{enumerate}
		\item [(i)] Define $\widehat{\phi}_{z^t}(\cdot) \eqdef \norm{\cdot - \rbr{z^t - \frac{1}{L} \nabla \phi(z^t)}}^2$ 
		\item [(ii)] Set $ z^{t+1} = \left(1-\alpha_t\right)M_{V,\widehat{\phi}_{z^t}}(z^t) + \alpha_t z^0 $ 	
		\end{enumerate}
	\item Output: $z^{T}$
\end{enumerate}

The method is constructed so that $z^{T} \approx M_{V,\phi}(y)$. This approach is based on Halpern acceleration of fixed point methods \cite{HalpernFixedPoint}, and as such, may be sometimes effective.

\section{Generalization to Other Quantization Algorithms}\label{app:generalization}

In Section~\ref{sect:method_problem}, we define $\mathbb{R}^d_{\le c}$ as a set of vectors with at most $c$ unique items. This translates to the idea of k-means quantization, a scalar nonlinear quantization where each weight is rounded to one of $c$ centroids found by clustering the weights. Below, we show how this can be generalized to linear quantization, vector quantization, additive quantization, and others.

Linear quantization is the most basic and widely used type of quantization where \textbf{weights are stored as integers}, possibly multiplied by a scale and added to a zero point. The simplest way to account for this quantization type is to declare that weight values are integers up to $c$: $V(x) = (0, 1, 2,..., c - 1)$. After that, one can treat scales / zero points as an extra non-quantized parameter, similar to biases or layer-normalized scales. This extra parameter interacts with weights by multiplication or addition, and hence it can be updated by backprop, similarly to other non-quantized weights. Equivalently, once can declare that $V(x) = (0, s, 2s,...,  s \cdot (c - 1) )$ for arbitrary $s \in \mathcal{R}$. Both options lead to equivalent fine-tuning algorithms where the V step does not change and the P step has an additional condition.

Next, let us discuss vector quantization. Consider a quantization that splits $x$ into 2-dimensional groups (non-intersecting pairs of adjacent weights) and encodes these weights as one of $c$ 2-dimensional codes that form its codebook. This can be viewed as two sets of weights (odd and even) quantized with scalar quantization, except that values in the two sets have the same partitioning $P(x)$. In other words, if two values in the odd half belong to the same partition, the corresponding values in the other half also belong to the same partition, though the values themselves can be different. Alternatively, one can simply write down a version of $\mathbb{R}^d_{\le c}$, where $V(\cdot)$ is a set of 2-dimensional vectors, not scalars. Likewise, higher-dimensional vector quantization translates to higher-dimensional items in $V(\cdot)$.

Both the P and V steps for vector-quantized weights follow the same general principle: P-step can be approximated by backprop with slightly more trainable parameters. In turn, the V step can be done by trying all values in V(x) and selecting the one with the lowest $\widehat \phi(\cdot)$. A more compute-efficient version of the V step for this case is described in Appendix~\ref{app:fast_l2_vq}.

RVQ and Additive Quantization can be treated as learning two separate sets of vector-quantized parameters. However, a more efficient way would be to run the V step to find the best combination of codes via beam search~\cite{aq}.

Quantization with sparse outliers~\cite{dettmers2022llm,dettmers2023spqr,lin2023awq} can be seen as learning two distinct matrices with different definitions of $\mathbb{R}^d_c$: one is quantized and the other is sparse (for outliers). Similarly, quantized weights with low-rank adapters (e.g. \cite{guo2024lqlora}) can treat the adapter as an additional non-quantized parameter for the P step.
This makes PV-tuning potentially extensible to neural network pruning.

\section{Efficient Linearized V Step for Vector Quantization}\label{app:fast_l2_vq}

As we describe in Section~\ref{sect:method_discrete_issues}, the linearized V step minimizes the squared error between quantized weights and the updated ``target'' weights. Here, we explain how one can compute and minimize the squared error efficiently in practice. To simplify the notation for this section, we define the objective as $\| x - B \|^2$ where $B$ is the target vector set by the linearized V step. Following the definition of squared $L2$ norm, this objective can be re-written as follows:
\begin{equation}
    \| x - B \|^2 = \|x\|^2 - 2 \langle x, B\rangle + \|B\|^2.
\end{equation}

Consider the first term: $\|x\|^2 = \sum_{i=1}^d x_i^2$ . Since $x\in \mathbb{R}^d_{\le c}$, this term is a sum of at most $c$ unique terms. Abusing your notation, this can be rewritten as $$\|x\|^2 = \sum_i^{\textcolor{red}{c}} V_i(x) ^ 2 \cdot |P_i(x)|,$$ where $V_i(x)$ is $i$-th unique element in $x$ and $|P_i(x)|$ is the number of such elements.

The second term is also a sum of $c$ unique values:
$$- 2\cdot \langle x, B\rangle  = - 2 \sum_{i=1}^d x_i B_i = - 2 \sum_{i=1}^{\textcolor{red}{c}} \left[ V_i(x) \cdot \sum_{i \in P_i(x)} B_i \right].$$ The third term does not depend on $x$.

If you know the objective for some given $x$, you can efficiently compute $\phi$ for all neighboring $\hat x \in \mathcal{N}_1(x)$ where you only change one index. For the sake of formality, let us define the set of such neighboring $\hat x$ as follows:

\begin{equation}
    \mathcal{N}_1(x) \eqdef \{\hat x \in \mathbb{R}^d_{\le{c}} : V(\hat x) = V(x), \, \| x - \hat x \|_0 = 1  \}
\end{equation}

Consider one $\hat x \in \mathcal{N}_1 (x)$ where only $k$-th value changed (i.e. $x_k \ne \hat x_k)$. Then,

\begin{equation}
    \phi(\hat x) - \phi(x) = \textcolor{red}{\|\hat x\|^2 - \|x\|^2} - \textcolor{blue}{2\cdot \langle\hat x - x, B\rangle } +  \textcolor{gray}{\|B\|^2 - \|B\|^2}\\
\end{equation}
\begin{equation}
    \phi(\hat x) - \phi(x) = \textcolor{red}{\hat x_k^2 - x_k^2} - \textcolor{blue}{2 \cdot (\hat x_k - x_k) \cdot B_k} + \textcolor{gray}{0}\\
\end{equation}

Note that, for any $\forall \hat x \in \mathcal{N}_1(x)$, there are $c^2$ possible values for $\textcolor{red}{\hat x_k^2 - x_k^2}$ and another $c^2$ unique values for $\textcolor{blue}{2 \cdot (\hat x_k - x_k) \cdot B_k}$, regardless of $d$,  since there are $c$ unique values in both $v$ and $\hat x$. This allows for an efficient local search algorithm:
\begin{enumerate}
    \item Let $x^0$ be the input to $M_P$
    \item Compute and save all $2c^2$ possible \textcolor{red}{red} and \textcolor{blue}{blue} values
    \item Compute $\phi_0 = \phi(x^0)$
    \item for t = 0, \dots:
    \item \quad for $\hat x \in \mathcal{N}_1(x^t)$:
    \item \quad \quad find $k : \hat x_k \ne x^t_k$ (there's only one such $k$)
    \item \quad \quad compute $\phi(\hat x) = \phi(x^t) + \textcolor{red}{\hat x_k^2 - x_k^2} - \textcolor{blue}{2 \cdot (\hat x_k - x_k) \cdot B_k} $
    \item \quad $x^{t+1} := \underset{\hat x \in \mathcal N_1(x^t)}{\arg \min} \phi (\hat x) $ \quad\quad  (minimum from array of pre-computed values)

\end{enumerate}

In practice, this can be extended from greedy (local) search to semi-greedy beam search.
These practical algorithms are described in AQ, LSQ, and LSQ++. Algorithms for $ \| A (x - B) \|^2$ are explained in AQLM and probably other works.

\section{Straight-through Gradient Estimation}\label{app:STE}

Straight-through gradient estimation is a technique for training neural networks with discrete components that ignores these discrete components during backpropagation. Its usage goes back to the Perceptron introduced by Rosenblatt~\cite{Rosenblatt_1957_6098}. There, the artificial neuron uses a step function as activation, but the training procedure treats this function as though it was identity. Subsequent works introduce variations of this idea, extend it to multi-layer networks~\cite{hinton_coursera}, discuss its convergence properties~\cite{Widrow199030YO,freund1998large,yin2019understanding}. Other works use straight-through estimation or similar techniques to training neural network with quantized weights~\cite{helwegen2019latent,shekhovtsov2021reintroducing,spallanzani2022training}.

\textbf{STE for LLM quantization.} As we discuss in Section~\ref{sect:related_qat}, straight-through estimation introduces an auxiliary non-quantized weight tensor that is updated using the gradients $\nabla \phi(y)$ w.r.t.\ quantized weights. The quantized weights are then updated to best approximate this auxiliary buffer, usually in terms of $L2$ error. As a result, if an update to $y - \frac 1L \nabla \phi(y)$ is not large enough to change the parameter, it is still accumulated in a straight-through ``buffer''. Eventually, the cumulative effect of several such updates will be large enough that $y^k$ will no longer be the solution to Equation~\eqref{eq:linearized_v_step}.

This strategy prevents the algorithm from stalling, but it does so at the cost of convergence guarantees~\cite{yin2019understanding}. When applied to extreme LLM quantization (Section~\ref{sect:experiments_ablation}, straight-through estimation initially improves $y^k$, but then stops improving and oscillates.
We also tried several a variant of straight-through estimation~\cite{spallanzani2022training} that introduce stochasticity to forward pass. When applied to extreme LLM quantization, this variant did not diverge like naive STE, but trained much slower and did not reach the same optimum as ``deterministic'' STE. We attribute this to the fact that adding noise during training can slow down convergence, which also applies to stochastic rounding (Appendix ~\ref{app:more_on_sr}).

\section{Stochastic Rounding}\label{app:more_on_sr}

Stochastic (or probabilistic) rounding is one of the techniques that can circumvent stalling when training low-precision weights.
To recall, the linearized V step \eqref{eq:linearized_v_step} can be seen as rounding $y^+\eqdef y - \frac{1}{L} \nabla \phi(y)$ to the nearest quantized weight in $\mathbb{R}^d_c$, which often happens to be $y$ itself. To circumvent the problem of rounding back to $y$, one can instead round stochastically, to one of the two adjacent values that $y^+$ falls between. Let's denote these two adjacent values $x_l$ and $x_r$ for left and right. The probability of rounding is inversely proportional to the rounding error (distance), or, in terms of the objective, $$p(\text{round to } x_l) = \frac{\widehat \phi (x_l) ^ {- 1/2}}{ \widehat \phi (x_l) ^ {- 1/2} + \widehat \phi (x_r) ^ {- 1/2}}.$$ This way, $$\underset{p(\text{round to } x)}{E} x = y - \frac 1L \nabla \phi (y).$$


The main drawback of stochastic rounding is t introduces noise, it changes the underlying optimization problem. Intuitively, if the optimal $x^\star$ is adjacent to a significantly worse solution, the method may oscillate between rounding to either side. This rounding noise increases further as we consider lower quantization width. In Section~\ref{sect:experiments_ablation} we exploit his phenomenon for real-world LLMs and find that stochastic rounding converges find significantly worse solutions, presumably because at every step, some portion of LLM weights will be rounded poorly. On top of that, when used for vector quantization, stochastic rounding is either intractable or biased.

\textbf{Stochastic rounding for vector quantization.} To recall, stochastic rounding for non-vector quantization needs to find two quantized values: the nearest neighbor above the solution, and the nearest neighbor below it. It will then round to either of the two values inversely proportionally to their rounding errors.

However, this intuition no longer works if you consider more complex quantization schemes such as vector quantization, additive quantization, quantized low-rank adapters, and others. In vector quantization, a group of weights is encoded jointly as one vector from a fixed set (usually called codebook or lattice). For simplicity, let us consider the case where the weight group size equals 2, i.e. weights are quantized in pairs.

For a pair of two eights, we can no longer rely on the fact that they have one neighbor from above and one from below. Instead, they may have any number of adjacent "clusters" they can be rounded to. Intuitively, a pair of weights is a point in 2-dimensional that can have neighbors from left, right, top, bottom, and any diagonals. Formally, to determine a full list of neighbors, we can run Delaunay triangulation on all vectors from the codebook (or lattice) plus the target vector that needs to be rounded, then find all points that share a triangle with the target vector.

Unfortunately, this procedure can be very expensive, especially for higher-dimensional vectors. A popular practical approximation to stochastic rounding for vector quantizations is to find K (e.g. 2) nearest vectors from the codebook, then use the probability formula from scalar stochastic rounding:

$$p(\text{round to } x_i) = \widehat \phi (x_i) ^ {- 1/2} / ( \sum^K_j \widehat \phi (x_j) ^ {- 1/2})$$

However, unlike the original stochastic rounding, this approximation does not guarantee that

\begin{equation}
    \underset{p(\text{round to } x_i)}{E} x_i = y - \frac 1L \nabla \phi (y).
\end{equation}

For a (counter)example, if there is a high density of codes on one side of the target, all K (e.g. 2) nearest neighbors will be on the same side. As a result, this approximate stochastic rounding is biased and further changes the optimization result.

\section{On \texorpdfstring{$L$-smoothness}{L-smoothness} of LLM Objective in Sparse Subspaces}
\label{app:subspace_smoothness}

The classical definition of $L$-smoothness for differentiable function $f:\mathbb{R}^d \to \mathbb{R}$ represented by requirement $$\|\nabla f(x) - \nabla f(y)\| \le L \|x-y\|, \qquad \forall x,y \in \mathbb{R}^d.$$

If function $f(x)$ is twice continuously differentiable, then it is easy to show that function $f$ is $L$-smooth if and only if $\|\nabla^2 f(x)\| \le L$. If striving to find the minimum value of $L$ then via following the definition, one has to select $L$ as $L=\underset{x \in \mathbb{R}^d}{\max}\left(\|\nabla^2 f(x)\|\right)$.

If the domain of function $f(x)$ is restricted to any subset $S \subseteq \mathbb{R}^d$, then the global $L$ smooth constant can only decrease for a new function. It can be observed from the fact that $\forall S \subseteq \mathbb{R}^d$ we have \[\|\nabla^2 f(x)\|=\underset{v \in \mathbb{R}^d \backslash 0}{\max} \left( \nabla^2 f(x) \cdot \nicefrac{v}{\|v\|} \right) \ge \underset{v \in S \backslash 0}{\max} \left( \nabla^2 f(x) \cdot \nicefrac{v}{\|v\|} \right), \forall x \in \mathbb{R}^d.\]

Sparse sub-spaces satisfy this requirement; therefore, this theoretical observation is valid in this circumstance. 

Another observation is that the definition of $L$ smooth constant has a global notion. However, for practical purposes, for the first-order training algorithm what matters is $L$-smoothness constant for the function $f(x)$ with the domain restricted to the trajectory of iterates only. Unfortunately, the training process iterates follow the prior unknown path in $\mathbb{R}^d$ space.

Below we demonstrate two approximate schemas for evaluating $L$-smoothness constant for a function with a domain (artificially) restricted to the trajectory induced by iterates generated by Gradient Descent (GD) for functions $f(x)$ with different subspace sizes and in general with different optimization trajectories, but with the same start iterate (model). We have performed $10$ iterations of GD for training auto-regressive Llama-like \texttt{LLama-160M} \cite{miao2023specinfer} and \texttt{TinyLlama-1.1B} \cite{zhang2024tinyllama} models using automatic tokenizer from these models. We trained all linear layers in these models. The used step size for GD is $10^{-4}$.

\paragraph{Schema I: Estimating $L$ along the trajectory of iterates without capturing local curvature.}

After running GD for $10$ iterations the $L$ smooth constant has been estimated along trajectory $s=\{x_1,x_2,\dots,x_{10}\}$ with  approximated as $$\hat{L} \eqdef \underset{x_i, x_j \in z, x_i \ne x_i}{\max} \left( \dfrac{\|\nabla f(x_i) - \nabla f(x_j)\|}{\|x_i - x_j\|}\right).$$

Results are presented in Tables \ref{tbl:estimate-L-with-def-LLama-160m}, 
\ref{tbl:estimate-L-with-def-TinyLlama-1.1B}. From them, we can see that $\hat{L}$ estimate along the trajectory of iterates have the same property as global $L$, namely during restricting subspace of training variables the $L$-the smooth constant is non-increasing, and in practice substantially decreasing. This schema exploits available information on gradient oracles in iterates in $s$ and iterates $s$ itself. This schema represents an estimation of upper bound $\hat{L}$ on the true value of $L$.

\begin{table}[ht]
\begin{center}
\caption{Estimated $L$ along the GD trajectory for \texttt{LLama-160m} (Schema I).}
\label{tbl:estimate-L-with-def-LLama-160m}
\begin{tabular}{|c|c|r|}
\hline
Subspace Size & Number of Trainable Parameters & Estimated $\hat{L}$ \\
\hline
\hline
5\% & 2.36M    & 10.01    \\
\hline
10\% & 8.26M    & 14.40    \\
\hline
20\% & 17.69M    & 305.72    \\
\hline
30\% & 24.77M    & 498.16    \\
\hline
40\% & 36.57M    & 919.82    \\
\hline
60\% & 60.75M    & 5454.79    \\
\hline
70\% & 85.52M    & 6915.06    \\
\hline
85\% & 102.04M    & 7043.19    \\
\hline
100\% & 113.25M    & 7251.50    \\
\hline
\end{tabular}
\end{center}
\end{table}

\begin{table}[ht]
\begin{center}
\caption{Estimated $L$ along the GD trajectory for \texttt{TinyLlama-1.1B} (Schema I).}
\label{tbl:estimate-L-with-def-TinyLlama-1.1B}
\begin{tabular}{|c|c|r|}
\hline
Subspace Size & Number of Trainable Parameters & Estimated $\hat{L}$ \\
\hline
\hline
5\% & 13.11M    & 33.24    \\
\hline
10\% & 49.28M    & 143.00    \\
\hline
20\% & 136.84M    & 2159.22    \\
\hline
30\% & 242.75M    & 2369.63    \\
\hline
40\% & 369.10M    & 2638.11    \\
\hline
60\% & 582.48M    & 5185.92    \\
\hline
70\% & 684.20M    & 5901.73    \\
\hline
85\% & 831.52M    & 6091.04    \\
\hline
100\% & 968.88M    & 9480.57    \\
\hline
\end{tabular}
\end{center}
\end{table}

\paragraph{Schema II: Estimating $L$ along the sequence of iterates with capturing local curvature.}

The previous schema used a fixed sequence of iterates $s=\{x_1,x_2,\dots,x_{10}\}$ essentially estimate the $L$-smoothness constant along the piece-wise linear interpolated path along $s$. In the next schema, we approximate $L$-smoothness constant as 
$$\hat{L}=\underset{x_i \in s}{\max} \left( \| \nabla^2 f(x_i)\| \right).$$

Therefore this schema exploits a sequence of points $s$ and selects the maximum in absolute values eigenvalue for all matrices $\nabla^2 f(x_i)$. Computing any spectral information for a matrix with big dimensions can be challenging.

The approximate schema that we have used to compute  $\| \nabla^2 f(x)\|$ leverages several observable facts. First, $\| \nabla^2 f(x)\|=\max(|\lambda_i(\nabla^2 f(x))|)$, where $\lambda_i$ is $i-th$ eigenvalue for $\nabla^2 f(x)$. Second, to identify the maximum eigenvalue in the absolute value we can use the normalized Power Iteration algorithm \cite{mises1929praktische}, which requires execution only the hessian-vector product. Third, we can use Taylor expansion and forward difference approximation for $\nabla f(x+r)$:
\[
\nabla f(x+r) - \nabla f(x) = \nabla^2 f(x) \cdot r + \mathcal{O}\left(\|r\|^2\right)
\]
The $K$ approximate hessian-vector product can be accomplished with $K+1$ gradient oracle calls. For the experiment, we run \textit{Power Iteration} for a prior known number of iterations equal to $10$. In fact \textit{Power Iteration} does not converge in case of degeneracy such as a situation when the matrix has two maximum eigenvalues in absolute values but with opposite signs, and the convergence rate is determined by the absolute value of the ratio of the second-largest-magnitude eigenvalue to the first. We ignore these aspects in our heuristic approximate Algorithm \ref{alg:approximate_power_iterations_for_hessian_norm}.

\begin{algorithm}[]    
    \caption{Approximate Matrix-free Algorithm for Computing $\|\nabla f(x)\|$}
    \label{alg:approximate_power_iterations_for_hessian_norm}
    \begin{algorithmic}[1]
    \STATE \textbf{Parameters:} Point $x \in \R^d$, fixed $\gamma \in \R$ such as $\gamma = 10^{-5} \cdot x$ for numerical stability.
    \STATE $r^0 \sim_{\mathrm{u.a.r}} \R^d$
    \STATE $g = \nabla f(x)$
    
    \FOR{$k = 0, 1, \dots, K$}
    \STATE $\hat{r^{k}} = \nicefrac{r^k}{\|r^k\|}$
    \STATE $r^{k+1} = \nicefrac{1}{\gamma}\left(\nabla f(x + \gamma \hat{r^{k}}) - g\right)$  // Approximate computation of $r^{k+1} \approx \nabla^2 (f) \cdot \hat{r^k}$.
    \ENDFOR
    \STATE \textbf{Output:} Approximate eigenvector  $\nicefrac{r^{K+1}}{\|r^{K+1}\|}$ corresponding to  $|\lambda_{\max}| \approx {\|r^{K+1}\|}$.
    \end{algorithmic}
\end{algorithm}

Results are presented in Tables \ref{tbl:sec-estimate-L-with-def-LLama-160m}, 
\ref{tbl:sec-estimate-L-with-def-TinyLlama-1.1B}. From them, we can see that also this notion of $\hat{L}$ estimate along the set of iterates has the same property as global $L$, namely during restricting subspace of training variables the $L$-the smooth constant is non-increasing similar to previous estimation method.

\begin{table}[ht]
\begin{center}
\caption{Estimated $L$ along the GD iterates for \texttt{LLama-160m} with local curvature (Schema II).}
\label{tbl:sec-estimate-L-with-def-LLama-160m}
\begin{tabular}{|c|c|r|}
\hline
Subspace Size & Number of Trainable Parameters & Estimated $\hat{L}$ \\
\hline
\hline
5\% & 2.36M    & 10.96    \\
\hline
10\% & 8.26M    & 791.30    \\
\hline
20\% & 17.69M    & 878.37    \\
\hline
30\% & 24.77M    & 1202.00    \\
\hline
40\% & 36.57M    & 1918.04   \\
\hline
60\% & 60.75M    & 5262.77  \\
\hline
70\% & 85.52M    & 5325.83    \\
\hline
85\% & 102.04M    & 5901.21  \\
\hline
100\% & 113.25M    & 11522.45    \\
\hline
\end{tabular}
\end{center}
\end{table}

\begin{table}[ht]
\begin{center}
\caption{Estimated $L$ along the GD iterates for \texttt{TinyLlama-1.1B} with local curvature (Schema II).}
\label{tbl:sec-estimate-L-with-def-TinyLlama-1.1B}
\begin{tabular}{|c|c|r|}
\hline
Subspace Size & Number of Trainable Parameters & Estimated $\hat{L}$ \\
\hline
\hline
5\% & 13.11M    & 40.30    \\
\hline
10\% & 49.28M    & 146.93    \\
\hline
20\% & 136.84M    & 4366.38   \\
\hline
30\% & 242.75M    & 4487.38 \\
\hline
40\% & 369.10M    & 6767.58    \\
\hline
60\% & 582.48M    & 8983.85    \\
\hline
70\% & 684.20M    & 15445.54    \\
\hline
85\% & 831.52M    & 21167.06    \\
\hline
100\% & 968.88M    & 28629.24    \\
\hline
\end{tabular}
\end{center}
\end{table}


\section{Calibration Data Matters}\label{app:data_matters}

For a fair comparison, we run our algorithm using the same calibration data as the baseline algorithms, typically a sample from RedPajama~\cite{together2023redpajama}. However, the way we handle this calibration data differs from most baselines~\cite{egiazarian2024extreme,dettmers2023spqr,tseng2024quipsharp}.

When analyzing their codebase, we found that these algorithms resample calibration data by taking a random excerpt of a fixed length from a random document in the calibration data, both sampled uniformly. However, with this approach, the data from longer documents (e.g. books) are underrepresented compared to shorter ones (e.g. news articles), which biases the calibration data.

Upon further investigation, we believe that new methods blindly copied this code from each other, going back to GPTQ~\cite{frantar2022gptq} and possibly further. This choice was harmless for GPTQ since it requires relatively little calibration data; however, full model fine-tuning like in QuIP\#~\cite{chee2023quip} and AQLM~\cite{egiazarian2024extreme}, works better on unbiased data.

To remove the bias, we use standard\footnote{from e.g. \url{https://github.com/huggingface/transformers/blob/main/examples/pytorch/language-modeling/run_clm.py}} LM preprocessing that concatenates all documents, then splits them into evenly sized chunks that become training sequences. The benefits from debiasing range from insignificant to as large as 0.15 perplexity for some models. To compensate for that, we run experiments with the same preprocessing protocol unless explicitly stated otherwise.

\section{Additional Engineering Considerations}\label{app:engineering}

When done naively, the longest operation is the discrete update~\eqref{eq:separable} that runs discrete optimization on all LLM parameters. For scalar quantization, this step does simple rounding and runs nearly instantly. In turn, applying it to vector quantization requires solving a discrete optimization algorithm (e.g. beam search) for every group of weights. However, since equation~\eqref{eq:separable} can only update weights within $S^k$, we can skip discrete optimization for any weight that was not among the chosen few. As a result, when training models with up to 70 billion parameters, we search less than one billion times per step.

The next longest operation is computing the gradients $\nabla \phi(\cdot)$, needed both for P and V steps. This involves running LLM multiple forward and backward passes on batches of texts and accumulating the gradients. To reduce the overhead from gradient computation, we compute the gradients once using mixed precision, then reuse these gradients for one P and one V step, respectively. In other words, we switch from alternating P and V steps to performing these steps simultaneously. We also reuse these gradients to update any parameters not affected by quantization: input embeddings, normalization layers, biases, etc.

To limit VRAM usage, we use gradient checkpointing~\cite{griewank2000algorithm}, batch accumulation. For larger models, we also use parameter sharding\footnote{We use PyTorch FullyShardedDataParallel~\cite{pytorch,pytorchdistributed} and wrap discrete weights as in \texttt{bitsandbytes}~\cite{bitsandbytes} }~\cite{deepspeed} and optimizer offloading~\cite{ren2021zerooffload}. We need these techniques so that smaller 7B LLMs can be trained on a single GPU and larger ones with 70B parameters fit into a single machine with 8${\times}$A100. Still, PV-tuning takes up to 1.5${\times}$ longer than tuning continuous parameters and uses more memory (during training) to hold the gradients w.r.t. dequantized weights.

\vspace{-5px}
\section{Additional Details for Section~\ref{sect:experiments_representations}}\label{app:exp_details_representations}
\vspace{-5px}

Here, we describe some of the implementation details we used to optimize different quantized representations. For every optimizations, we check that this optimization improves the algorithm in both MSE and perplexity and does not require additional resources that would result in unfair comparison.

\textbf{Vector Quantziation.} The original algorithm quantizes all weights a single pass over input channels. We found that it works slightly better if we make multiple such passes and, between passes, update codes by Adam to minimize the same objective~\cite{van2024gptvq}. This is resembles QuIP\# with no RHT \& lattices or AQLM with no additive quantization. For simplicity, we also use a single shared codebook (e.g. instead of groupwise codebooks).

\textbf{VQ+outliers}
To select outlier coordinates, we use \url{https://github.com/fmfi-compbio/admm-pruning} that outperforms the SpQR outlier criterion~\cite{dettmers2023spqr} in both L2 error and perplexity (when both criteria are used alongside vector quantization). We re-run the ADMM procedure multiple times during optimization, resulting in an EM-like algorithm.

\textbf{VQ+lowrank.}
We experimented with two different initializations for low-rank correction: a) quantizing weight matrix, then training LoRC on quantization error, as in~\cite{zeroquantv2} and b) initializing LoRC to approximate the reference weight matrix, the applying vector quantization to LoRC errors. Of these two approaches, we found that the latter one produces a (slightly) better solution in both MSE and perplexity.

\vspace{-5px}
\section{Additional Details for Section~\ref{sect:experiments_ablation}}\label{app:exp_details_finetuing}
\vspace{-5px}

\textbf{VQ:} vector quantization as a simple near-optimal algorithm. We use a single 16-bit codebook with group size 16 (over input dimension) and per-channel trainable scales over output dimension. During P step, we update the codebook, scales and non-quantized model layers by backprop. During V step, we try every code in the codebook and choose the one that minimizes~\eqref{eq:linearized_v_step}.

\textbf{GPTQ:} scalar quantization with block-wise scales and zero points. We use 2-bit base codes and block size 128. During P step, we update the scales and non-quantized parameters by backprop. In turn, V step performs simple rounding to nearest (scaled) integer.\footnote{We also found that we can perform V step by running the entire OPTQ calibration while using $y - \frac 1L \nabla \phi(y)$ as target weights, showing the flexibility of the general PV framework. This variant converges to the same values, but is much slower due to having to re-accumulate the L2 error hessians for each V step.}.

\textbf{AQLM:} we perform scalar quantization with two 8-bit codebooks and group size 8 and channel-wise steps. This was originally published in~\cite{egiazarian2024extreme} as the ``speed-optimized'' configuration capable of fast lookup-based inference. During P step, we update both codebooks, as well as scales an non-quantized parameters by backprop. On V step, we run beam search with beam size 8 with gradient-updated dequantized weight as target.

\textbf{Training.} We minimize Kullback–Leibler divergence as our loss function for all three representations. More specifically, we fine-tune the quantized ``student'' model to approximate the predictions (logits) of a ``teacher'' --- the same model prior to quantization. We fine-tune on the same RedPajama sample as in calibration. More specifically, we use the official one-billion-token sample\footnote{https://huggingface.co/datasets/togethercomputer/RedPajama-Data-1T-Sample} provided by the dataset authors~\cite{together2023redpajama}. We use a batch size of $2^{20}$ (${\approx}$ 1M) tokens, split into batches of model-specific sequence length (e.g. 4096 tokens for \textsc{Llama 2}, 8192 for \textsc{Llama 3}). In early experiments, we found PV to be resilient to the choice of batch size, consistently training with up to $4\times$ smaller batches.

\textbf{Hyperparameter tuning:} we tune the hyperparameters for each method individually. For all algorithms, we tune learning rate on a logarithmic scale out of (1e-4, 3e-4, 1e-3, 3e-3, 1e-2). For methods involving discrete optimization, we tune learning rate for P and V step separately. The optimal configuration for STE and stochastic rounding is to use learning rate 3e-4 for both codes and codebooks. The optimal configuration for subspace linearized PV and the same with STE is to use learning rate 3e-4 for P step and 3e-3 for codes. Curiously, the subspace methods are stable even with larger step sizes for codes, e.g. 1e-2, whereas unrestricted methods (e.g. pure STE) are not.

For stochastic rounding, we found that the unbiased rounding~\cite{polino2018model} causes the model quality to quickly degrade, likely due to the fact that the algorithm makes too many weight changes due to rounding. The results we reported in Table~\ref{tab:ablation} use stochastic rounding with temperature 0.2. In other words, we measure the distances to 2 nearest bins and round to each bin proportionally to $\text{distance}^{-5}$. We also tried gradually annealing the rounding temperature to 0, but achieved only insignificant improvements in accuracy and perplexity (<0.01). To simplify evaluation, we do not use annealing in Table\ref{tab:ablation}.

For PV-tuning and PV-tuning with STE, we always set $\tau$ to maximum number of weights such that updating them satisfies $||x^{k+1} - x^k|| / ||x^k|| < 0.01$. We implement this by trying to update large portions of weights (in our implementation, we update 0.01 of all weights at a time) until the total change exceeds the constraint. Once it does, we rollback weights in the last chunk to $x^k$ until the constraint is satisfied. As an implementation quirk, we always allow at least one weight to change even if changing just one weight already exceeds $0.01 \cdot ||x^k||$. However, we didn't see this in practice.

We also found that Lamb\footnote{Lamb is a variant of Adam that limits the norm of weight updates relative to the norm of weights.}~\cite{lamb} is more stable when training with large batch sizes, but converges to approximately the same accuracy. We use $\beta_1{=}0.9$ and $\beta_2{=}0.95$, same as in most LLM training configurations~\cite{touvron2023llama,zhang2022opt,scao2022bloom}. \textit{We do not use learning rate decay for simplicity.} It is likely possibly to improve our results by annealing the learning rates during training or using a warmup. We intentionally avoid this to reduce the number of ``moving parts'' and simplify evaluation. In future, we plan to release PV-tuned models with a more careful choice of training hyperparameters. Overall, we found that PV-tuning is about as sensitive to hyperparameters as continuous-only LLM finetuning~\cite{tseng2024quipsharp,egiazarian2024extreme,shao2023omniquant}.

\section{Additional Details and Evaluations for Section~\ref{sect:experiments_maintable}}\label{app:exp_main_moredata}

In this section, we report additional results for \textsc{Llama 2 \& 3}, \textsc{Mistral} and \textsc{Phi-3} and discuss baselines. In this section, we always evaluate PV-tuning for vector quantization, using 14-16 bits per codebook for a group of 8 or 16 weights, with each combination fitting a particualr niche. For instance, 16 bits per 8 weights is slightly over 2 bits per weight, whereas 14 bits per 16 weights is either at or below 1 bit per weight, depending on the model size.

We use \textsc{Llama 2} models as our main benchmark as they are well studied in the PTQ community. Here, we gather the latest state-of-the-art algorithms at the time of publication and group them according to their target number of bits, roughly 1-1.7 bits per weight (Table~\ref{tab:llama1bit}) and 2-2.5 bits per weight (Table~\ref{tab:llama2bit}).

Both our training runs and almost all baselines use the same sample of RedPajama data from previous sections\footnote{\url{https://huggingface.co/datasets/togethercomputer/RedPajama-Data-1T-Sample}}.
The only exception to this is OneBit that uses a corpora of LLM outputs gathered specifically for that paper~\cite{onebit}.

The source code for this method was unavailable until very recently. The official link \url{https://github.com/xuyuzhuang11/OneBit} used to point to an empty repository until the code was released in a commit \url{https://github.com/xuyuzhuang11/OneBit/commit/380a6aedc3c060993056ff50b79065e893be99ae} on May 10th. Thus, unfortunately, we did not have time to make OneBit compatible with models except \textsc{Llama 2} 7B and 13B that were featured in the original paper.

For \textsc{Llama 3}, we evaluate PV-tuning of vector quantization against the baselines introduced in~\cite{huang2024good}. Curiously, their paper seems to compute perplexity differently than our paper. Since our protocol matches with most prior works~\cite{frantar2022gptq,dettmers2023spqr,egiazarian2024extreme,tseng2024quipsharp}, we chose to re-evaluate the results from~\cite{huang2024good} with our perplexity code and not the other way around. We calibrate using the official code~\footnote{\url{https://github.com/Macaronlin/LLaMA3-Quantization}} and reuse published models where available.

\begin{table*}[t!]

\vspace{-0.5em}
\centering
\small
\setlength\tabcolsep{2.1pt}
\renewcommand{\arraystretch}{1.05}
  \caption{Evaluation of quantized \textsc{Llama 2} models for 1-1.7 bits per weight, grouped by bitwidth. We report perplexity on WikiText2~\citep{wikitext103} \& C4~\citep{C4} and zero-shot accuracy. The \textbf{Average} is the mean accuracy of 5 zero-shot tasks. Primary metrics are Wiki2 (PPL), C4 (PPL) and Average (Accuracy).
  }\label{tab:llama1bit}%

\begin{tabular}{lcc|cc|cccccc}
 \toprule
  \bf{Size} & \bf{Method} & \bf{Avg bits} & \bf{Wiki2$\downarrow$} & \bf{C4$\downarrow$} & \bf{ArcC$\uparrow$} & \bf{ArcE$\uparrow$} & \bf{HellaSwag$\uparrow$} & \bf{PiQA$\uparrow$} & \bf{WinoGrande$\uparrow$} & \bf{Average$\uparrow$}\\
  \midrule
  \multirow{6}{*}{7B}
  & -- & 16.00 & 5.12 & 6.63 & 43.43 & 76.3 & 57.14 & 78.07 & 69.06 & 64.80 \\
  & BiLLM & 1.08 & 32.48 & 40.52 & 24.4 & 36.2 & 34.8 & 60.6 & 52.4 & 41.68 \\
  & OneBit & 1.0 & 9.73 & 11.11 & 29.61 & 41.58 & 52.58 & 68.12 & 58.41 & 50.06 \\
  & PV-Tuning & 1.02 & 8.28 & 10.37 & 25.85 & 57.58 & 40.88 & 68.99 & 57.77 & 50.21 \\
  \cmidrule{2-11}
  & PB-LLM & 1.70 & 69.20 & 80.15 & 25.00 & 28.00 & 27.70 & 53.80 & 49.30 & 36.76 \\
  & PV-Tuning & 1.58 & 7.32 & 9.35 & 29.44 & 64.14 & 46.03 & 73.12 & 63.38 & 55.22 \\
 \midrule

  \multirow{6}{*}{13B}
  & -- & 16.00 & 4.57 & 6.05 & 48.38 & 79.42 & 60.03 & 79.05 & 72.22 & 67.82 \\
  & BiLLM & 1.10 & 16.77 & 27.54 & 21.84 & 46.84 & 30.97 & 60.61 & 56.75 & 43.40 \\
  & OneBit & 1.00 & 8.76 & 10.15 & 33.62 & 43.10 & 56.43 & 70.13 & 61.72 & 53.00\\
  & PV-Tuning & 0.97 & 7.23 & 9.31 & 30.8 & 63.09 & 47.03 & 72.25 & 62.35 & 55.10 \\
  \cmidrule{2-11}
  & PB-LLM & 1.70 & 151.09	& 144.59 & 21.89 & 35.08 & 24.82 & 54.17 & 52.76 & 37.74 \\
  & PV-Tuning & 1.37 & 6.65 & 8.72 & 34.04 & 67.38 & 49.14 & 73.94 & 65.51 & 58.00 \\
 \midrule
  
  \multirow{6}{*}{70B}
  & -- & 16.00 &3.12	& 4.97 & 54.35 & 82.74 & 64.79 & 82.15 & 77.98 & 72.40 \\
  & BiLLM & 1.08 & 8.41 & 15.19 & 38.91 & 67.3 & 45.71 & 69.7 & 67.64 & 57.85 \\
  & PV-Tuning & 1.01 & 6.09 & 8.20 & 38.31 & 71.80 & 53.98 & 75.24 & 68.43 & 61.55 \\

  \cmidrule{2-11}

  & PB-LLM & 1.70 & 28.37 & 32.63 & 39.89 & 49.50 & 36.62 & 61.43 & 62.80 & 50.05 \\
  & PV-Tuning & 1.14 & 5.52 & 7.50 & 42.66 & 74.96 & 56.42 & 77.37 & 71.51 & 64.58 \\

\bottomrule
\end{tabular}

\end{table*}
\begin{table*}[t!]

\vspace{-0.5em}
\small
\centering
\small
\setlength\tabcolsep{2.37pt}
\renewcommand{\arraystretch}{1.1}
  \caption{Evaluation of quantized \textsc{Llama 2} models for 2-2.3 bits per weight, grouped by bitwidth. We report perplexity on WikiText2~\citep{wikitext103} \& C4~\citep{C4} and zero-shot accuracy. The \textbf{Average} is the mean accuracy of 5 zero-shot tasks. Primary metrics are Wiki2 (PPL), C4 (PPL) and Average (Accuracy).
  }\label{tab:llama2bit}%

\begin{tabular}{lcc|cc|cccccc}
 \toprule
  \bf{Size} & \bf{Method} & \bf{Avg bits} & \bf{Wiki2$\downarrow$} & \bf{C4$\downarrow$} & \bf{ArcC$\uparrow$} & \bf{ArcE$\uparrow$} & \bf{HellaSwag$\uparrow$} & \bf{PiQA$\uparrow$} & \bf{WinoGrande$\uparrow$} & \bf{Average$\uparrow$}\\
  \midrule
  
  \multirow{6}{*}{7B}
  & -- & 16.00 & 5.12 & 6.63 & 43.43 & 76.30 & 57.14 & 78.07 & 69.06 & 64.80 \\
  & QUIP\# & 2.02 & 8.22 & 11.01 & 34.60 & 64.60 & 48.34 & 75.10 & 64.90 & 57.51 \\
  & AQLM & 2.02 & 6.64 & 8.56 & 33.28 & 61.87 & 49.49 & 73.56 & 64.17 & 56.47 \\
  & PV-Tuning & 2.02 & 5.84 & 7.62 & 38.40 & 71.17 & 53.50 & 76.99 & 66.69 & 61.35 \\
  \cmidrule{2-11}
  & AQLM & 2.29 & 6.29 & 8.11 & 34.90 & 66.50 & 50.88 & 74.92 & 65.67 & 58.57 \\
  & PV-Tuning & 2.29 & 5.84 & 7.62 & 38.91 & 72.90 & 53.94 & 77.37 & 67.72 & 62.17 \\
 \midrule

  \multirow{6}{*}{13B}
  & -- & 16.00 & 4.57 & 6.05  & 48.38 & 79.42 & 60.03 & 79.05 & 72.22 & 67.82 \\
  & QUIP\# & 2.01 & 6.06 & 8.07  & 39.50 & 69.30 & 53.44 & 77.30 & 67.70 & 61.45 \\
  & AQLM & 1.97 & 5.65 & 7.51 & 37.80 & 69.78 & 53.74 &  76.22 & 65.43 & 60.59 \\
  & PV-Tuning & 1.97 & 5.12 & 6.83 & 43.00 & 75.38 & 57.96 & 78.24 & 70.01 & 64.92 \\
  \cmidrule{2-11}
  & AQLM & 2.19 & 5.41 & 7.21 & 41.98 & 75.04 & 55.49 & 76.99 & 69.53 & 63.81 \\
  & PV-Tuning & 2.19 & 5.05 & 6.74 & 45.65 & 77.57 & 58.00 & 78.07 & 70.96 & 66.05 \\
  
 \midrule
  
  \multirow{4}{*}{70B}
  & -- & 16.00 & 3.12 & 4.97  & 54.35 & 82.74 & 64.79 & 82.15 & 77.98 & 72.40 \\
  & QUIP\# & 2.00 & 4.16 & 6.01  & 48.70 & 77.30 & 60.79 & 80.30 & 75.90 & 68.60 \\
  & AQLM & 2.07 & 3.94 & 5.72 & 51.96 & 81.44 & 61.46 & 80.25 & 76.64 & 70.35 \\
  & PV-Tuning & 2.07 & 3.78 & 5.56 & 51.88 & 81.02 & 63.07 & 80.74 & 76.87 & 70.72 \\

\bottomrule
\end{tabular}

\end{table*}
\begin{table*}[t!]

\vspace{-0.5em}
\centering
\small
\setlength\tabcolsep{2.1pt}
\renewcommand{\arraystretch}{1.05}
  \caption{Evaluation of quantized \textsc{Llama 3} models for 1-2.3 bits per weight, grouped by bitwidth. We report perplexity on WikiText2~\citep{wikitext103} \& C4~\citep{C4} and zero-shot accuracy. The \textbf{Average} is the mean accuracy of 5 zero-shot tasks. Primary metrics are Wiki2 (PPL), C4 (PPL) and Average (Accuracy).}\label{tab:llama3_1bit}%

\begin{tabular}{lcc|cc|cccccc}
 \toprule
  \bf{Size} & \bf{Method} & \bf{Avg bits} & \bf{Wiki2$\downarrow$} & \bf{C4$\downarrow$} & \bf{ArcC$\uparrow$} & \bf{ArcE$\uparrow$} & \bf{HellaSwag$\uparrow$} & \bf{PiQA$\uparrow$} & \bf{WinoGrande$\uparrow$} & \bf{Average$\uparrow$}\\
  \midrule
  \multirow{11}{*}{8B}
  & -- & 16.00 & 5.54 & 7.10 & 50.43 & 80.09 & 60.19 & 79.71 & 72.61 & 68.60\\
  
  & BiLLM & 1.10 & 28.80 & 65.00 & 17.70 & 36.00 & 28.90 & 56.10 & 51.00 & 37.90 \\
  & PV-Tuning & 1.01 & 11.13 & 11.63 & 25.43 & 59.09 & 41.01 & 68.26 & 56.27 & 50.01 \\
  \cmidrule{2-11}
  & PB-LLM & 1.70 & 35.68 & 197.56 & 17.50 & 31.70 & 27.70 & 52.50 & 50.40 & 36.00 \\
  & PV-Tuning & 1.54 & 9.43 & 10.26 & 32.68 & 65.78 & 46.66 & 72.63 & 64.40 & 56.43 \\
  \cmidrule{2-11}
  & QuIP & 2.00 & 76.95 & 98.47 & 21.30 & 29.00 & 29.20 & 52.90 & 51.70 & 36.80 \\
  & PB-LLM & 2.00 & 21.74 & 61.04 & 17.20 & 37.80 & 29.80 & 57.00 & 52.50 & 38.80 \\
  & DB-LLM & 2.00 & 12.77 & 14.82 & 28.20 & 59.10 & 42.10 & 68.90 & 60.40 & 51.80 \\
  & PV-Tuning & 2.00 & 6.99 & 8.29 & 42.75 & 75.84 & 55.52 & 77.75 & 69.93 & 64.36 \\
  \cmidrule{2-11}
  & PV-Tuning & 2.30 & 6.76 & 8.10 & 42.32 & 75.46 & 56.21 & 78.45 & 71.67 & 64.82 \\
  
 \midrule

  \multirow{9}{*}{70B}
  & -- & 16.00 & 2.59 & 5.78 & 60.41 & 86.7 & 66.34 & 82.48 & 80.9 & 75.366 \\
  & BiLLM & 1.10 & 15.26 & 65.07 & 25.11 & 46.42 & 37.48 & 58.21 & 53.63 & 44.17 \\
  & PV-Tuning & 1.00 & 8.67 & 9.68 & 25.51 & 54.34 & 48.71 & 65.56 & 63.22 & 51.47 \\
  \cmidrule{2-11}
  & PB-LLM & 1.70 & 16.27 & 54.03 & 25.8 & 49.90 & 34.90 & 56.5 & 53.10 & 44.1 \\
  & PV-Tuning & 1.14 & 7.76 & 8.93 & 33.28 & 63.89 & 53.39 & 69.86 & 69.61 & 58.01 \\

  \cmidrule{2-11}
  
  & QuIP & 2.00 & 11.63 & 18.54 & 26.50 & 48.90 & 40.90 & 65.30 & 61.70 & 48.70 \\
  & PB-LLM  & 2.00 & 10.33 & 28.89 & 25.10 & 40.60 & 42.70 & 65.20 & 56.40 & 46.00 \\
  & PV-Tuning & 2.07 & 4.55 & 6.54 & 50.77 & 80.22 & 63.85 & 79.22 & 78.06 & 70.42 \\

\bottomrule
\end{tabular}

\end{table*}
\begin{table*}[t!]

\vspace{-0.5em}
\centering
\small
\setlength\tabcolsep{2.1pt}
\renewcommand{\arraystretch}{1.05}
  \caption{Evaluation of quantized \textsc{Mistral v0.1 7B (A)} and \textsc{Phi 3-Mini-4k-Instruct 3.8B (B)} models for 1-2.3 bits per weight, grouped by bitwidth. We report perplexity on WikiText2~\citep{wikitext103} \& C4~\citep{C4} and zero-shot accuracy. The \textbf{Average} is the mean accuracy of 5 zero-shot tasks. Primary metrics are Wiki2 (PPL), C4 (PPL) and Average (Accuracy).}\label{tab:mistral_and_phi}%

\begin{tabular}{lcc|cc|cccccc}
 \toprule
  \bf{Size} & \bf{Method} & \bf{Avg bits} & \bf{Wiki2$\downarrow$} & \bf{C4$\downarrow$} & \bf{ArcC$\uparrow$} & \bf{ArcE$\uparrow$} & \bf{HellaSwag$\uparrow$} & \bf{PiQA$\uparrow$} & \bf{WinoGrande$\uparrow$} & \bf{Average$\uparrow$}\\
  \midrule
  \multirow{8}{*}{$\underset{(A)}{\text{7B}}$}
  
  & -- & 16.00 & 4.77 & 5.71 & 50.43 & 80.09 & 60.19 & 79.71 & 72.61 & 68.60\\
  & AQLM & 1.01 & 70.88 & 34.67 & 19.11 & 27.36 & 26.3 & 52.99 & 48.38 & 34.83 \\
  & PV-Tuning & 1.01 & 7.58 & 8.14 & 27.73 & 60.82 & 44.33 & 69.97 & 60.38 & 52.65 \\

  \cmidrule{2-11}
  & QuIP\# & 2.01 & 6.02 & 6.84 & 39.76 & 72.14 & 52.95 & 76.71 & 69.30 & 62.20 \\
  & AQLM & 2.01 & 6.19 & 6.90  & 30.8 & 49.87 & 25.63 & 56.53 & 57.06 & 43.98 \\
  & PV-Tuning & 2.01 & 5.29 & 6.17 & 44.20 & 77.36 & 58.21 & 79.05 & 72.77 & 66.32 \\
  & AQLM & 2.27 & 5.78 & 6.55 & 42.06 & 75.17 & 55.09 & 76.93 & 70.24 & 63.90 \\
  & PV-Tuning & 2.27 & 5.22 & 6.10 & 45.31 & 77.57 & 58.61 & 79.22 & 70.96 & 66.33 \\

  \midrule

  \multirow{10}{*}{$\underset{(B)}{\text{3.8B}}$}
  & -- & 16.00 & 4.77 & 5.71 & 60.41 & 86.7 & 66.34 & 82.48 & 80.90 & 75.37 \\
  & AQLM & 1.03 & 102.54 & 85.20 & 18.86 & 28.16 & 27.13 & 53.54 & 49.80 & 35.50 \\
  & PV-Tuning & 1.03 & 11.71 & 14.59 & 21.50 & 49.87 & 34.62 & 65.67 & 54.70 & 45.27 \\
  & AQLM & 1.60 & 34.36 & 35.16 & 19.62 & 35.10 & 29.71 & 57.24 & 51.7 & 38.67 \\
  & PV-Tuning & 1.60 & 9.21 & 12.16 & 30.89 & 63.55 & 43.09 & 70.35 & 61.4 & 53.86 \\

  \cmidrule{2-11}
  & AQLM & 2.03 & 8.85 & 12.19 & 41.04 & 74.49 & 47.25 & 72.36 & 66.93 & 60.41 \\
  & PV-Tuning & 2.03 & 6.88 & 10.08 & 46.84 & 78.24 & 53.75 & 78.67 & 71.03 & 65.71 \\
  & AQLM & 2.30 & 8.07 & 11.23 & 47.01 & 78.03 & 50.51 & 75.95 & 70.8 & 64.46 \\
  & PV-Tuning & 2.30 & 6.63 & 9.89 & 50.51 & 79.5 & 55.32 & 79.49 & 73.01 & 67.57 \\
\bottomrule
\end{tabular}

\end{table*}

\section{Inference Speed with Vector Quantization Kernels}\label{app:inference_speed}

In this section, we demonstrate that PV-Tuning can achieve speedups by using fast inference kernels from the underlying quantized representation. Since our main experiments use vector quantization, we adopt simplified version of AQLM inference kernels with one codebook of size 16 for groups of 8 consecutive weights. This kernel is not written by us: it was added to the official AQLM implementation by an open-source contributor.

We adapt this inference code to our codebase and switch it to using group size 16 to support out 1.1-1.58 bit models. We evaluate inference speeds on a single Nvidia RTX 3090 GPU using transformers with cuda graphs\footnote{The specific version of each library can be found in `requirements.txt`}.

We were able to acheve 47.4 tokens per second for 7B model, 32.8 tokens per second for 13B model and 7.2 tokens per second for \textsc{Llama 2} 70B model. Compared to 16-bit inference code, this results in speedups of 14\%, 22\% and 28\% respectively. Note that PV-Tuning does not make the model inherently faster than, for instance, AQLM. However, it can enable faster inference indirectly: by achieving sufficient quality with lower bit models and allowing practitioners to deploy smaller and faster models.


\section{The Choice of the Initial Point \texorpdfstring{$x^0$}{x0}}

Algorithm \ref{alg:pv_alg}  converges from any initial point $x^0 \in \R^d_c$ (\ref{th:PV_theorem}), but it might converge to a different final point $\hat{x}(x^0)$. In this section, we discuss two possible variants of instantiating the initial point.

\subsection{Clipping of \texorpdfstring{$x^{\star}$}{x*}}

\textbf{Notation}: $[d] \eqdef \{1, \cdots, d\}$ is the set of $d$ distinct natural numbers from $1$ to $d$.

Assume we have the vector $x$ with dimensionality $d$ and let $c \in [d]$.
Let us define the clipping operator $C(x): \R^d \to \R^d_{\leq c}$ in the following way:
\begin{equation}
    \label{eq:clipping_op}
    C(x) = \Tilde{x} : \quad V(\Tilde{x}) \subseteq V(x) \quad \text{and} \quad |V(\Tilde{x})| \leq c 
\end{equation}
We can come up with different variants of clipping operators $C(x)$, but in our experiments, we choose $C(x)$ such that $V(\Tilde{x})$ consists of the smallest distinct elements from $V(x)$. We will call this clipping operator by $C_{-}(x)$.

\begin{example}
    If $x = \{1, 1, 3, 5, 9\}$, then having $c=2$ the clipping operator $C_{-}(x) = \Tilde{x} = \{1, 1, 3, 3, 3\}$. So, $\Tilde{x}$ consists of smallest $2$ elements from $V(x) = \{1, 3, 5, 9\}$.
\end{example}

\subsection{Random \texorpdfstring{$x^0 \in \R^d_c$}{x0 in Rdc}}

Now let us define the algorithm for generating random points from $\R^d_c$ (Alg.~\ref{alg:random_point_generation}).

\begin{algorithm}[]
    \caption{Generation of Random Point $x^0 \in \R^d_c$}
    \label{alg:random_point_generation}
    \begin{algorithmic}[1]
    \STATE \textbf{Parameters:} dimensionality $d$, number of distinct values $c$
    \STATE Generate a set $U$ with $c$ unique elements: $U = \{u_i \in \R^d\}$ : $|U| = c$.
    \STATE Sample $d$ random elements $x_i^0 \in U$: $x^0 = (x_1^0, \cdots, x_d^0)^T$ such that $|V(x^0)| = c$.
    \end{algorithmic}
\end{algorithm}

Different random $x^0 \in \R^d_c$ provide different loss functions. To get more stable and smoothed results we ran the algorithm (\ref{alg:pv_alg}) with different initial points, generated by algorithm (\ref{alg:random_point_generation}). We will denote $r$ as the number of runs with different random initialization.

\section{Small-Scale Experiments and Interpretation}
\label{app:experiments_smallscale}

\subsection{Objective function}

Let $c \in [d]$ and consider the problem

\begin{equation}
    \label{eq:PV_objective}
    \min_{x\in \R^d_{\leq c}}{\phi(x)},
\end{equation}
where $\phi : \R^d \to \R$, and $\R^d_{\leq c} \subset \R^d$ is the set of all vectors in $\R^d$ whose $d$ entries take at most $c$ distinct values. In other words, the cardinality of the set
\begin{equation}
    V(x) \eqdef \{x_1, \cdots, x_d\}
\end{equation}
is at most $c$. For small experiments, we aim to minimize the following objective
\begin{equation}
    \label{eq:complex_quad_objective}
    \phi(x) = \sum_{i=1}^d{a_i(x_i-x_i^{\star})^2}=(x-x^{\star})^T\Lambda(x-x^{\star}),
\end{equation}
where $a_i \in \R^d$, $x^{\star}$ is a unique optimal point: $\nabla \phi(x^{\star}) = 0$ and
\begin{equation}
    \Lambda = \text{diag}(a_1, \cdots, a_d)= \begin{bmatrix}a_1 & \cdots & 0 \\ \vdots & \ddots & 0 \\ 0 & 0 & a_d \end{bmatrix}.
\end{equation}

We set $a_i = \nicefrac{i}{d}$ for $i \in [d]$. Hence, we have the Lipschitz continuity for (\ref{eq:complex_quad_objective}) with $L = 1$. Note that the Algorithm \ref{alg:pv_alg} does not guarantee to converge to $x^{\star}$. Let $\hat{x}$ denote the vector to which the Algorithm \ref{alg:pv_alg} converges.

\subsection{Tiny-scale experiments \texorpdfstring{($d=6$, $c\in[1, 6]$)}{d=6, c in [1, 6]}}
\label{sec:tiny_scale_experiments}

We applied PV algorithm to the problem (\ref{eq:PV_objective}) with $\phi(x)$ being (\ref{eq:complex_quad_objective}) (Fig.~\ref{fig:pv_experiments_small}). Number of runs with different random initial points $r=50$.

\begin{figure}[ht!]
    \centering
    \begin{subfigure}{0.49\linewidth}
        \includegraphics[width=0.99\linewidth]{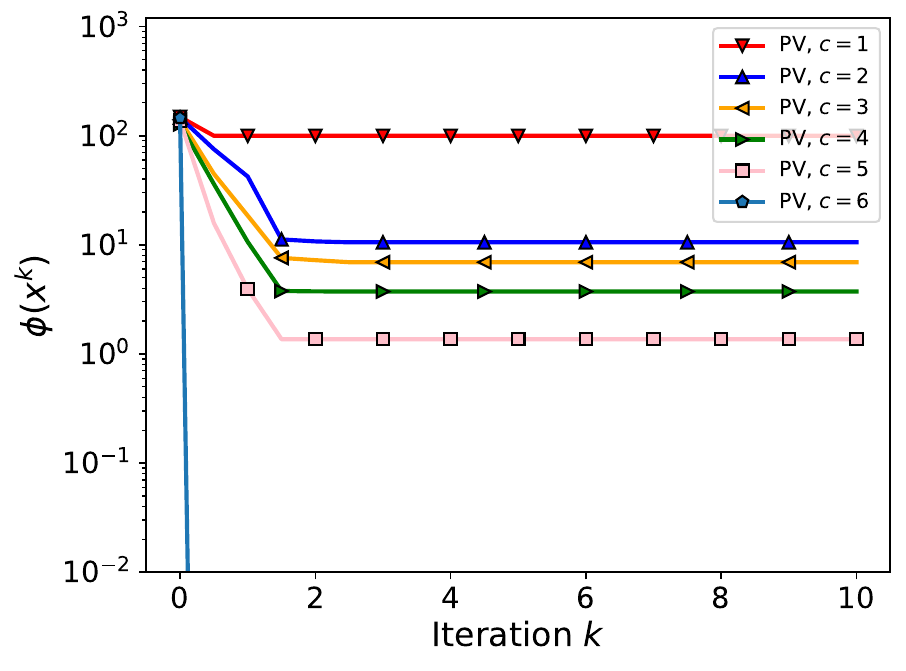}
        \captionsetup{width=.9\linewidth}
        \captionsetup{aboveskip=0pt, belowskip=12pt}
        \caption{PV algorithm with different values of $c\in [1, 6]$, $d=6$.}
        \label{fig:pv_quadratic_d6_c_from_1_to_6_fun}
    \end{subfigure}
    \begin{subfigure}{0.49\linewidth}
        \includegraphics[width=0.99\linewidth]{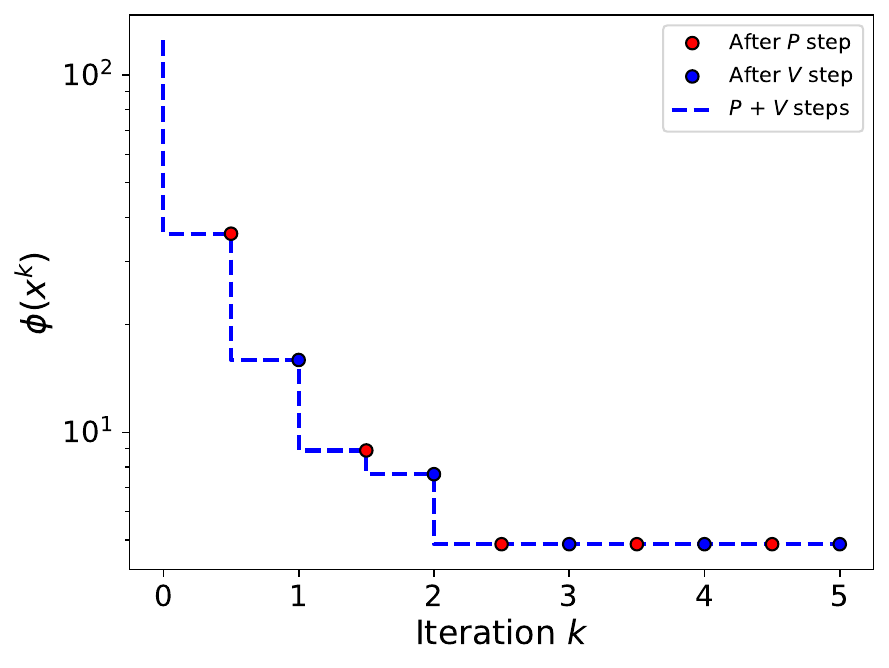}
        \captionsetup{width=.9\linewidth}
        \captionsetup{aboveskip=0pt, belowskip=12pt}
        \caption{The influence of P and V steps on the loss function $\phi(x)$, $c=3$.}
        \label{fig:P_and_v_steps_d6_c_3}
    \end{subfigure}
    \caption{PV algorithm (\ref{alg:pv_alg}) applied on the very small dimensional ($d=6$) quadratic objective (\ref{eq:complex_quad_objective}). The starting point $x^0$ is chosen randomly using the ng algorithm (\ref{alg:random_point_generation}).}
    
    \label{fig:pv_experiments_small}
\end{figure}

When $c=1$ we converge just in one step of PV (red line on Fig.\ref{fig:pv_quadratic_d6_c_from_1_to_6_fun}) because we have only one degree of freedom to play with and we fully utilize it on the first step. One can show that the solution for the case $c=1$ will be $\hat{x} =\left(\sum_{i=1}^d{a_i}\right)^{-1}\sum_{i=1}^d{a_i x^{\star}_i}$.

Note that as we increase the maximum number of unique elements $c$, the final loss $\phi(\hat{x})$ decreases up until the moment when $c=d$, when the loss is zero (the plot \ref{fig:pv_quadratic_d6_c_from_1_to_6_fun} is in logarithmic scale and that is why we cannot see the last line).

We run PV algorithm (\ref{alg:pv_alg}) with different random starting points $x^0 \in \R^d_c$ (\ref{alg:random_point_generation}). Different starting points can lead to different local optimum. That is why we ran the algorithm (\ref{alg:pv_alg}) several times with different random initial points $x^0$. A number of runs $r=50$, we used this value for all further experiments.

Each of the two steps of the PV algorithm contributes to the convergence. To observe this we plotted the loss function over the iterates and explicitly marked the progress of both P and V steps (Fig.~\ref{fig:P_and_v_steps_d6_c_3}). We can see that we have progress during each of these steps and one single P and V step is not enough to obtain a solution even in this very small and simple case.

These simple experiments demonstrate that
\begin{enumerate}
    \item larger $c$ (smaller compress ratio) provides better final accuracy
    \item several P and V steps are needed to converge to the solution
\end{enumerate}

\subsection{Small-scale experiments \texorpdfstring{($d=100$, $c\in[1, 100]$)}{d=100, c in [1, 100]}}
\label{sec:larger_scale_experiments}

The problem of the algorithm (\ref{alg:pv_alg}) is that the V step requires the full parameter search which gives us the complexity $\mathcal{O}(c^d)$. Even for small tasks, this becomes unpractical to solve.

\begin{example}
    Let us take $d=100$ and $c=10$, then the complexity of one V step will be $\mathcal{O}(10^{100})$. Modern computers can make roughly 10 petaflops or $10^{16}$ calculations per second, hence we will have to wait $\sim 10^{76}$ years to make a single V step.
\end{example}

Let us consider special sets of function $\phi(x)$ that we will call separable functions. This class of functions should satisfy the assumption (\ref{assp:sep_function}).

\begin{assumption}[Separable function]
    \label{assp:sep_function}
    The function $\phi(x) : \R^d \to \R$ can be written in the form $\phi(x) = \sum_{i=1}^d{\phi_i(x_i)}$, where $\phi_i(\cdot)$ is a mapping $\phi_i(\cdot): \R \to \R$.
\end{assumption}

\begin{example}
    The objective function (\ref{eq:complex_quad_objective}) is a sum of squares that can be written in the following form:
    \begin{equation}
        \phi(x) = \sum_{i=1}^{d}{a_i(x_i - x^{\star}_i)^2} = \sum_{i=1}^{d}{\phi_i(x_i)},
    \end{equation}
    where $\phi_i(x_i) = a_i(x_i - x^{\star}_i)^2$.
    Hence, the objective (\ref{eq:complex_quad_objective}) is a separable function.
\end{example}

One can show that for separable functions (\ref{assp:sep_function}) the algorithm (\ref{alg:pv_alg}) can be written in the form (\ref{alg:pv_alg_optimized}). Hence, for separable functions, we can compute the V step in $\mathcal{O}(c \cdot d)$ operations (instead of $\mathcal{O}(c^d)$), which makes the algorithm ($\ref{alg:pv_alg}$) practical to use.

\begin{algorithm}[]    
    \caption{PV Algorithm with Optimized V step}
    \label{alg:pv_alg_optimized}
    \begin{algorithmic}[1]
    \STATE \textbf{Initialization:} starting point $x^0 \in \R^d_c$
    \FOR{$k = 0, 1, \dots$}
    \STATE $y^k = \arg{\min_{y \in \R^d}{\left\{\phi(y) : P(y) \supseteq P(x^{k})\right\}}}$ \hfill (P step)
    \FOR{$i = 0, 1, \dots, d$}
        \STATE $x^{k+1}_i = \arg{\min_{x_i \in \R}{\left\{\phi_i(x_i) : x_i \in V(y^k)\right\}}}$ \hfill (Optimized V step)
    \ENDFOR
    \ENDFOR
    \end{algorithmic}
\end{algorithm}

Then we ran the optimized PV algorithm (\ref{alg:pv_alg_optimized}) on the quadratic objective (\ref{eq:complex_quad_objective}). The results are presented in (Figure~\ref{fig:pv_optimized_quadratic_d100}). Number of runs with different random initial points $r=50$.

For these experiments, we see the same results as for tiny scale with $d=6$ (\ref{sec:tiny_scale_experiments}): larger $c$ (smaller compress ratio) provides better final accuracy, and several P and V steps have to be done to obtain better solution. In addition, we observe that the PV algorithm does not decrease the dimensionality (number of unique elements) of $V(x)$.

As it was mentioned before, the algorithm (\ref{alg:pv_alg}) is conceptual only and cannot be used in the raw form in practice. That is why we need to move to the experiments with linearized V step.

\begin{figure}[ht!]
    \centering
    \begin{subfigure}{0.49\linewidth}
        \includegraphics[width=0.99\linewidth]{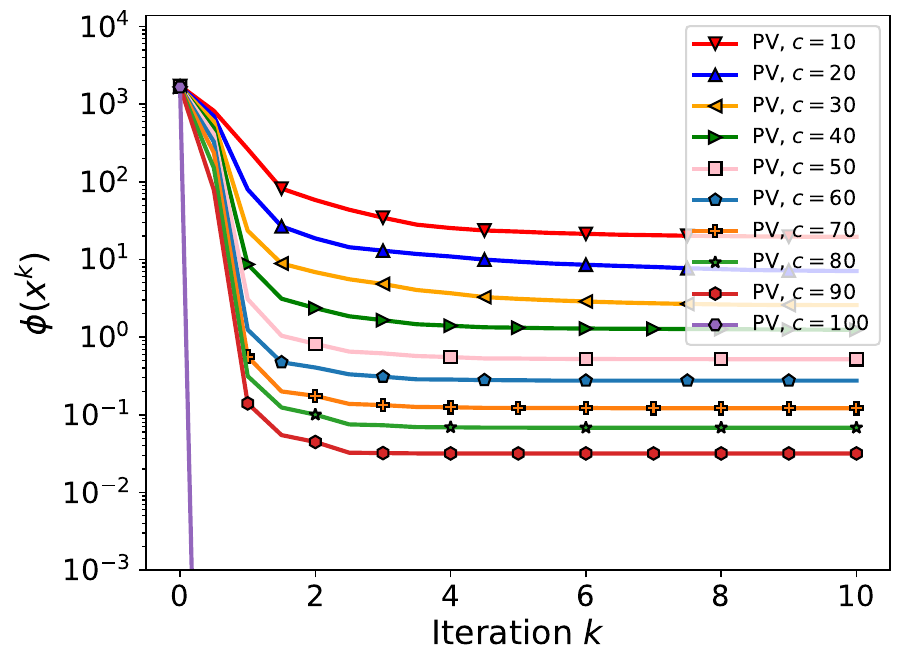}
        \captionsetup{width=.9\linewidth}
        \captionsetup{aboveskip=0pt, belowskip=12pt}
        \caption{PV algorithm with different values of $c\in [10, 100]$, $d=100$.}
        \label{fig:pv_optimized_quadratic_d100_c_from_10_to_100_fun}
    \end{subfigure}
    \begin{subfigure}{0.49\linewidth}
        \includegraphics[width=0.99\linewidth]{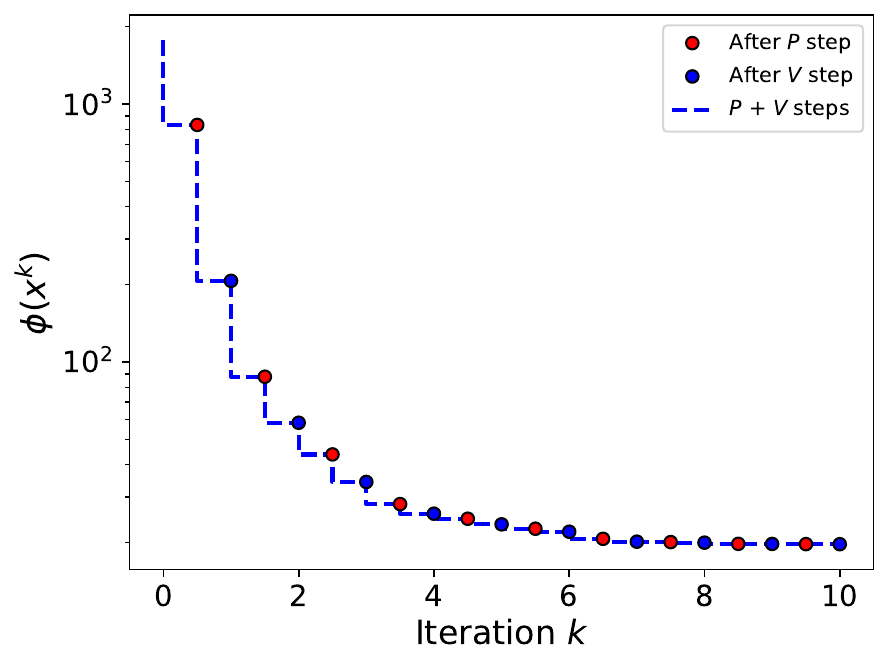}
        \captionsetup{width=.9\linewidth}
        \captionsetup{aboveskip=0pt, belowskip=12pt}
        \caption{The influence of P and V steps on the loss function $\phi(x)$, $c=10$.}
        \label{fig:P_and_v_steps_d100_c_10}
    \end{subfigure}
    \caption{Optimized PV algorithm (\ref{alg:pv_alg_optimized}) applied on the quadratic objective (\ref{eq:complex_quad_objective}), $d=100$. Number of runs with different random initial points $r=50$.}
    \label{fig:pv_optimized_quadratic_d100}
\end{figure}

\subsection{Linearized PV}
\label{sec:exp_linearized_PV}

Linearized V step (\ref{sec:V1}) allows us to greatly reduce the cost of one V step. We ran Linearized PV on the quadratic objective (\ref{eq:complex_quad_objective}) (Fig.~\ref{fig:linearized_PV}).

\begin{figure}[ht!]
    \centering
    \begin{subfigure}{0.49\linewidth}
        \includegraphics[width=0.99\linewidth]{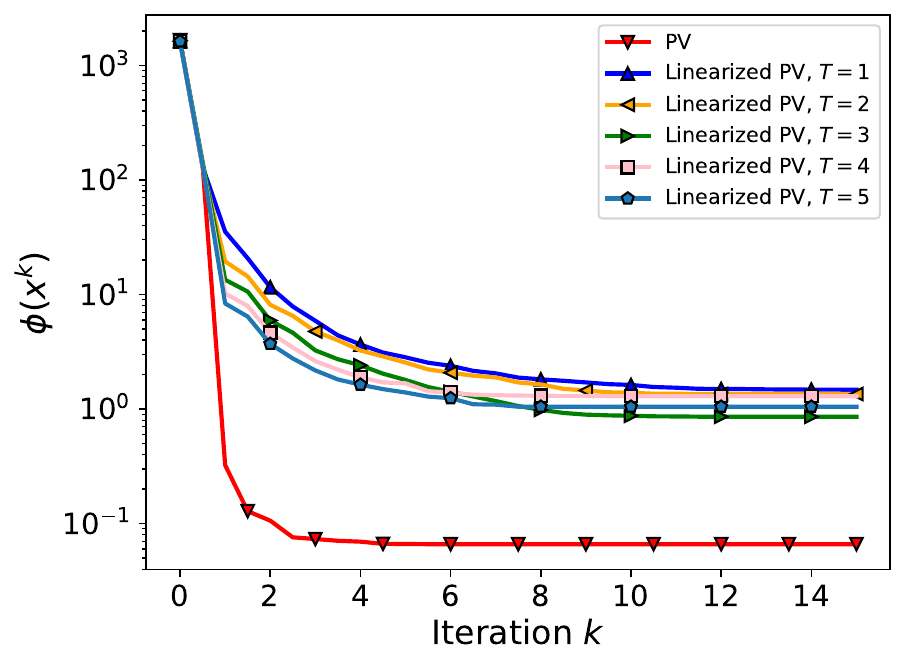}
        \captionsetup{width=.9\linewidth}
        \captionsetup{aboveskip=0pt, belowskip=12pt}
        \caption{Linearized PV algorithm (\ref{sec:V1}) with different $T \in [1, 5]$. Larger $T$ provides faster convergence rates, but the same final accuracy.}
        \label{fig:linearized_PV_different_T}
    \end{subfigure}
     \begin{subfigure}{0.49\linewidth}
        \includegraphics[width=0.99\linewidth]{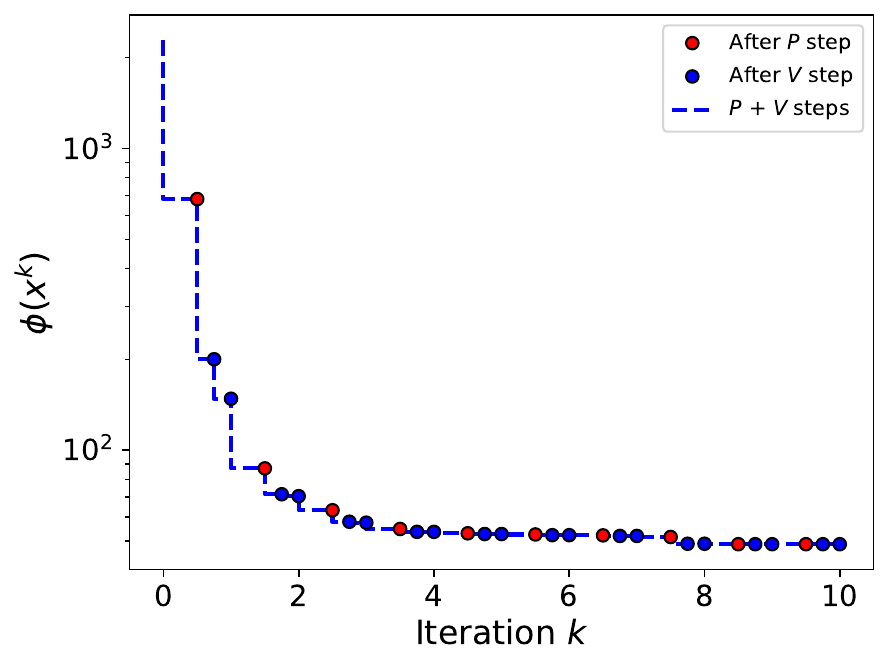}
        \captionsetup{width=.9\linewidth}
        \captionsetup{aboveskip=0pt, belowskip=12pt}
        \caption{The influence of P and V steps on the loss function $\phi(x)$, $c=10$, $T=2$. We need to make $2$ linearized V steps on each iteration.}
        \label{fig:P_and_v_steps_d100_c_10_Linearized_PV}
    \end{subfigure}
    \caption{Experiments with Linearized PV algorithm (\ref{sec:V1}).}
    \label{fig:linearized_PV}
\end{figure}

From the first experiment (Fig.~\ref{fig:linearized_PV_different_T}) we can see that
\begin{enumerate}
    \item increasing $T$ provides slightly better convergence rates and approximately the same final accuracy. We saw the similar results on large-scale experiments. Hence, we can use $T=1$ to save computations.
    \item Linearized PV algorithm (\ref{sec:V1}) converges to a worse accuracy than the exact PV (\ref{alg:pv_alg}).
    \item Linearized PV has to make more iterations than PV to converge
\end{enumerate}

The second plot (Fig.\ref{fig:P_and_v_steps_d100_c_10_Linearized_PV}) demonstrates the effect of multiple Linearized V step. We can see that the largest effect comes from the first V step.

\subsection{Linearized PV + sparse updates}
In previous section (\ref{sec:exp_linearized_PV}) we have seen that Linearized PV algorithm converges to a worse accuracy than the exact PV (\ref{alg:pv_alg}). 

Linearized PV (\ref{sec:V1}) with combination of sparse updates (\ref{sect:method_ours}) is intended to mitigate this issue. 

On (Fig.~\ref{fig:PV_vs_Linearized_PV_vs_Sparse_Updates_d100_c80_T1_tau1}) we can see comparison of three methods: exact PV (\ref{alg:pv_alg}) -- red line, Linearized PV (\ref{sec:V1}) -- blue line and Linearized PV with sparse updates (\ref{sect:method_ours}) -- green line.

From this experiment we observe
\begin{enumerate}
    \item Linearized PV with sparse updates converges to a better accuracy than Linearized PV
    \item inearized PV + sparse updates has to make more iterations than Linearized PV and exact PV to converge
\end{enumerate}

Hence, this approach helps us to converge to a better accuracy, but with a price of larger number of iterations to converge.

\begin{figure}[ht!]
    \centering
    \begin{subfigure}{0.49\linewidth}
        \includegraphics[width=0.99\linewidth]{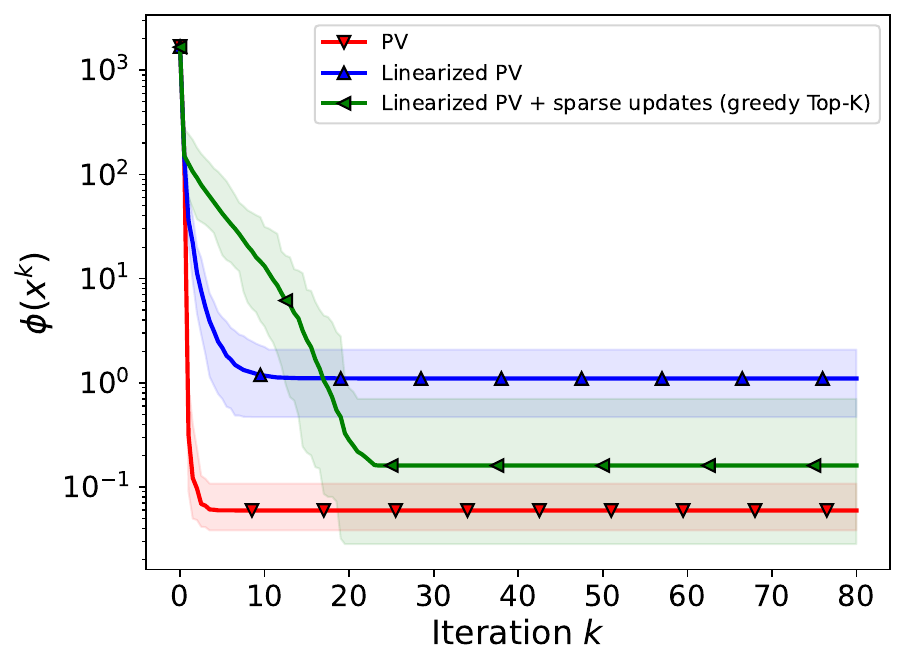}
        \captionsetup{width=.9\linewidth}
        \captionsetup{aboveskip=0pt, belowskip=12pt}
        \caption{Influence of sparse updates}
        \label{fig:PV_vs_Linearized_PV_vs_Sparse_Updates_d100_c80_T1_tau1}
    \end{subfigure}
    \begin{subfigure}{0.49\linewidth}
        \includegraphics[width=0.99\linewidth]{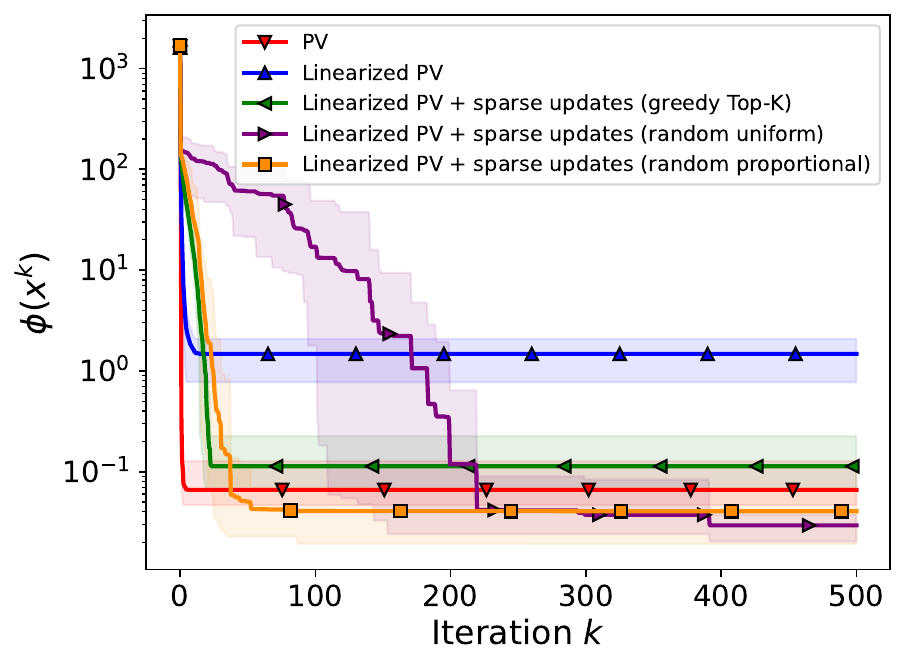}
        \captionsetup{width=.9\linewidth}
        \captionsetup{aboveskip=0pt, belowskip=12pt}
        \caption{Different choise of sampling for choosing $\cS^k$}
        \label{fig:PV_vs_Linearized_PV_vs_Sparse_Updates2_d100_c80_T1_tau1}
    \end{subfigure}
    \caption{Comparison of the exact PV algorithm (\ref{alg:pv_alg}) with Linearized PV (\ref{sec:V1}) and Linearized PV + sparse updates (\ref{sect:method_ours}).}
    \label{fig:pv_linearized_sparse}
\end{figure}

We can use different rules for choosing the subspace $\cS^k$ which produce different convergence rates and the final accuracy levels (\ref{fig:PV_vs_Linearized_PV_vs_Sparse_Updates2_d100_c80_T1_tau1}).

\begin{figure}[ht!]
    \centering
    \begin{subfigure}{0.49\linewidth}
        \includegraphics[width=0.99\linewidth]{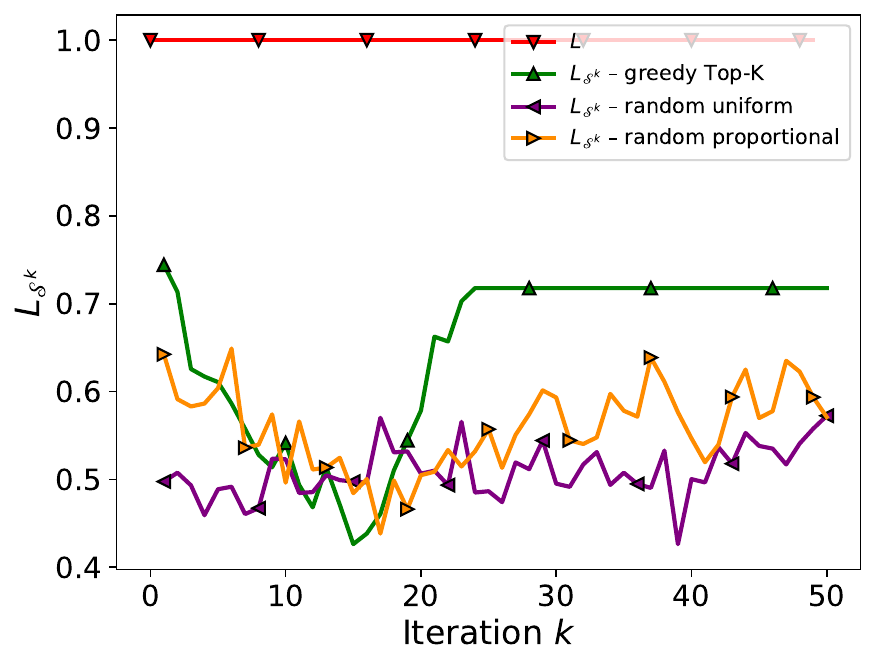}
        \captionsetup{width=.9\linewidth}
        \captionsetup{aboveskip=0pt, belowskip=12pt}
        \caption{The behaviour of $L_{\cS^k}$ in Linearized PV + sparce updates.}
        \label{fig:L_Sk_for_sparce_updates}
    \end{subfigure}
    \begin{subfigure}{0.49\linewidth}
        \includegraphics[width=0.99\linewidth]{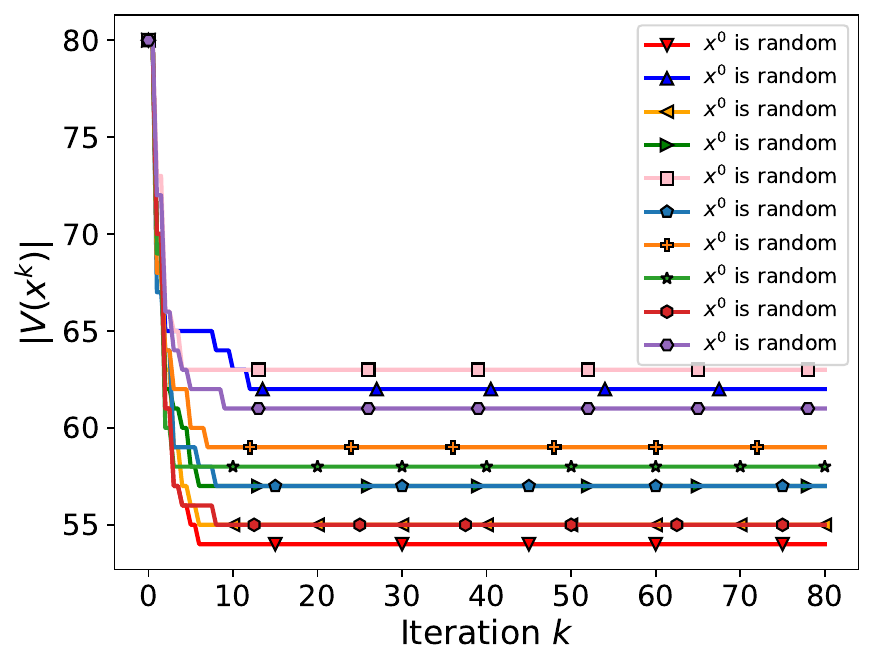}
        \captionsetup{width=.9\linewidth}
        \captionsetup{aboveskip=0pt, belowskip=12pt}
        \caption{Degradation of dimensionality of $V(x)$ for Linearized PV algorithm (\ref{sec:V1}).}
        \label{fig:degradation_of_V_Linearized_PV}
    \end{subfigure}
    \caption{Comparison of the exact PV algorithm (\ref{alg:pv_alg}) with Linearized PV (\ref{sec:V1}) and Linearized PV + Sparse Updates (\ref{sect:method_ours}).}
    \label{fig:other_experiments_with_sparce_PV}
\end{figure}

In large scale experiments we used sparse updates with $\cS^k$ being chosen greedily based on the absolute values of the entries of gradient vector $|\nabla_i \phi(y^k)|$ -- green line. Here we demonstrated that other strategies based on random uniform sampling or random proportional to $|\nabla_i \phi(y^k)|$ sampling can be even more advanced providing better final accuracy at cost of larger number of iterations to converge -- purple and orange lines (Fig.~\ref{fig:PV_vs_Linearized_PV_vs_Sparse_Updates2_d100_c80_T1_tau1}).

We can see that Linearized PV with sparse updates can converge to even a better accuracy than the exact PV algorithm. The problem of Linearized PV algorithm is that we come to the local minimum and cannot get out of that minima because of the small value of the gradient (we are near to a real solution) and small stepsize.

Linearized PV with sparse updates allows us to mitigate this problem by reducing the subspace from $\R^d$ to $\cS^k$ in which we are solving this optimization problem. This allows us to greatly reduce the local Lipschitz constant from $L$ to $L_{\cS^k}$ and hence significantly increase the stepsize $\gamma=\nicefrac{1}{L_{\cS^k}}$. We demonstrated how $L_{\cS^k}$ changes for Linearized PV with different sparse updates sampling methods (Fig.~\ref{fig:L_Sk_for_sparce_updates}).

\section{PV\texorpdfstring{$^{+}$}{+} Algorithm}

We can have degradation of $|V(x^k)|$ during both the P and V steps. Here are examples for both of them with optimizing simple quadratic objective $\phi(x) = \sum_{i=1}^d{(x_i-x_i^{\star})}$ (case when all $a_i$, $i \in [d]$ in (\ref{eq:complex_quad_objective}) are equal to one).

\begin{enumerate}
    \item \textbf{Degradation during the P step:} Let $x^{\star} = \{0, 2, 1\}$, $x^0 = \{x, x, y\}$, so $|V(x^0)|=2$, then after the P step we will have $y^0 = \{1, 1, 1\}$, hence $|V(y^0)| = 1$.
    \item \textbf{Degradation during the V step:} Let $x^{\star} = \{2, 10, 0, 11\}$, $x^0 = \{x, y, y, z\}$, so $|V(x^0)|=3$, then after the P step we will have $y^0 = \{2, 5, 5, 11\}$. Finally, after the V step we have $x^{1} = \{2, 11, 2, 11\}$, so $|V(x^1)| = 2$.
\end{enumerate}

In real experiments we observe decreasing of $|V(x)|$. You can observe this phenomena with Linearized PV algorithm (\ref{fig:degradation_of_V_Linearized_PV}). As we can see we have a big decreasing of $|V(x)|$ during the first several iterations. We can use this gap to find even better solution with improved final accuracy and faster convergence rate.

To add additional unique element in $V(x)$ we considered the modification of V step: V${}^{+}$ step, where we allow $|V(x)|$ to become larger up to some upper bound.

Let $\phi(\cdot)$ is a mapping $\phi(\cdot): \R^d \to \R$ and the vector $x^{\star} \in \R^d$ be the optimal point: $\nabla \phi(x^{\star})=0$. Let the vector $x \in \R^d_{\leq c}$ and define the set $W(x, x^{\star}, c)$ such that it contains at most $c - |V(x)|$ unique elements from $V(x^{\star})$ without elements from $V(x)$:
\begin{equation}
    \label{W_definition}
    W(x, x^{\star}, c) = V(x^{\star}) \setminus V(x): \quad |W(x, x^{\star}, c)| = c - |V(x)|
\end{equation}

\begin{algorithm}[]
    \caption{PV${}^{+}$ Algorithm}
    \label{alg:pv_plus_alg}
    \begin{algorithmic}[1]
    \STATE \textbf{Parameters:} starting point $x^0 \in \R^d_{\leq c}$, the optimal point $x^{\star}$, maximal number of distinct values $c$, 
    \FOR{$k = 0, 1, \dots$}
    \STATE $y^k = \arg{\min_{y \in \R^d}{\left\{\phi(y) : P(y) \supseteq P(x^{k-1})\right\}}}$ \hfill (P step)
    \STATE $x^{k+1} = \arg{\min_{x \in \R^d}{\left\{\phi(x) : V(x) \subseteq V(y^k) \bigcup W(y^k, x^{\star}, c)\right\}}}$ \hfill (Modified V step)
    \ENDFOR
    \end{algorithmic}
\end{algorithm}

\begin{theorem} Assume $\phi$ is bounded below, and let $x^0\in \R^d_{\leq \hat{c}}$. Then the algorithm PV${}^{+}$ has the following guarantees
\begin{itemize}
\item [(i)] $y^k, x^k \in \R^d_{\leq \hat{c}}$ \quad for all $k\geq 0$,
\item [(ii)] $\phi(x^{k+1}) \leq \phi(y^k) \leq \phi(x^k)$ for all $k\geq 0$, and
\item [(iii)] the sequence $\{\phi(x^{k})\}_{k\geq 0}$ converges to some value, which is smaller or equal to one produced by PV algorithm
\end{itemize}
\end{theorem}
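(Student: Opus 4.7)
The plan is to follow the three-part structure of the proof of Theorem~\ref{th:PV_theorem}, exploiting two structural facts about the modified V step: (a) the enlarged code set $V(y^k) \cup W(y^k, x^{\star}, c)$ has cardinality at most $c$ by construction of $W$ in \eqref{W_definition}, and (b) $V(y^k) \subseteq V(y^k) \cup W(y^k, x^{\star}, c)$, so the V${}^+$ feasible set contains the V feasible set at every iteration.

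For part (i), I would argue by induction on $k$: the base case holds by assumption, and if $x^k \in \R^d_{\leq c}$, then the P step gives $|V(y^k)| \leq |V(x^k)| \leq c$ because $P(y^k) \supseteq P(x^k)$ forces $y$-entries to coincide whenever $x^k$-entries do; the V${}^+$ step then gives $V(x^{k+1}) \subseteq V(y^k) \cup W(y^k, x^{\star}, c)$, and fact (a) bounds this by $c$. Part (ii) is immediate from feasibility: $y = x^k$ satisfies $P(y) \supseteq P(x^k)$ so $\phi(y^k) \leq \phi(x^k)$, and $x = y^k$ satisfies $V(x) \subseteq V(y^k) \cup W$ by fact (b) so $\phi(x^{k+1}) \leq \phi(y^k)$. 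The first half of part (iii)—convergence of $\{\phi(x^k)\}$—follows from part (ii) (monotone non-increasing) together with the boundedness-below assumption, so the sequence converges to its infimum, exactly as in Theorem~\ref{th:PV_theorem}.

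The hard part, and the step I expect to require the most care, is the comparison claim in (iii): that the limit produced by PV${}^+$ is no larger than the limit produced by PV from the same $x^0$. The natural approach is to couple the two processes, denoting their iterates by $x^k_{+}, y^k_{+}$ and $x^k, y^k$, and prove $\phi(x^k_{+}) \leq \phi(x^k)$ by induction on $k$. The base case is trivial. For the inductive step, one would first use fact (b) to observe that if the P steps produced the same $y^k$, then the V${}^+$ minimum would be $\leq$ the V minimum, since the former optimizes over a superset. The subtlety is that $y^k_{+}$ and $y^k$ are generally different because $P(x^k_{+})$ and $P(x^k)$ differ. The cleanest resolution is a two-step coupling argument: from $x^k_{+}$ with $\phi(x^k_{+}) \leq \phi(x^k)$, the P step can only further decrease the loss to $\phi(y^k_{+}) \leq \phi(x^k_{+}) \leq \phi(x^k)$; then comparing $\phi(x^{k+1}_{+})$ with $\phi(x^{k+1})$ requires comparing the V${}^+$ minimum over a set containing $V(y^k_{+})$ against the V minimum over $V(y^k)$, which is not a direct set inclusion. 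I would handle this by noting that since $y^k_{+}$ itself is feasible for the V${}^+$ step, $\phi(x^{k+1}_{+}) \leq \phi(y^k_{+}) \leq \phi(x^k)$, and hence also $\leq \phi(y^k)$ and in the limit $\leq$ the PV limit. If a stronger iterate-by-iterate comparison is desired, one would likely need an additional monotonicity assumption on the P-minimum with respect to coarsenings of the partition, which should be stated as a hypothesis or verified for the specific $\phi$ of interest.
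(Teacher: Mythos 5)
Your treatment of parts (i), (ii), and the convergence portion of (iii) is correct and matches the paper's strategy: the paper explicitly spells out only the new inequality $\phi(x^{k+1})\leq\phi(y^k)$ (using that $y^k$ is feasible for the modified V step), and then declares the rest identical to the proof of Theorem~\ref{th:PV_theorem}. Your argument for (i) is in fact somewhat more careful than the paper's, since you explicitly verify that the enlarged code set $V(y^k)\cup W(y^k,x^\star,c)$ has cardinality at most $c$ by the definition of $W$; the paper never checks this.

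You are also right to flag the comparison claim in (iii) as the genuinely hard step, and your instinct that the naive coupling breaks down is sound. Two remarks. First, your proposed fix does not actually work: from $\phi(x^{k+1}_+)\leq\phi(y^k_+)\leq\phi(x^k_+)\leq\phi(x^k)$ you cannot conclude $\phi(x^{k+1}_+)\leq\phi(y^k)$, because $\phi(y^k)\leq\phi(x^k)$ goes in the wrong direction, and the induction $\phi(x^k_+)\leq\phi(x^k)$ does not close since you obtain $\phi(x^{k+1}_+)\leq\phi(x^k)$ but need $\phi(x^{k+1}_+)\leq\phi(x^{k+1})$. Only the first iterate is provably dominated ($\phi(x^1_+)\leq\phi(x^1)$, because identical starting points yield the same $y^0$ and the $V^+$ feasible set contains the $V$ feasible set); after that the two trajectories visit different feasible sets and a simple superset argument is unavailable. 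Second --- and this is the important point --- the paper does not prove this comparison either. The phrase ``the rest of the proof is identical'' cannot supply it, because Theorem~\ref{th:PV_theorem} makes no comparative claim and so contains no argument to copy. You have in effect identified a gap in the paper: as stated, the ``smaller or equal to the PV limit'' clause in part (iii) is unproved, and absent further structural hypotheses on $\phi$ (or a redefinition of the claim as a one-step rather than limiting comparison) it is not clear that it holds, since two local-search procedures started from the same point but following different greedy paths can land in different basins.
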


\begin{proof}
    {\bf Part (ii):} Since $$x^{k+1} = \arg{\min_{x \in \R^d}{\left\{\phi(x) : V(x) \subseteq V(y^k) \bigcup W(y^k, x^{\star}, c)\right\}}}$$ and because $x=y^k$ satisfies the constraint $V(x) \subseteq V(y^k) \bigcup W(y^k, x^{\star}, c)$, we conclude that $\phi(x^{k+1}) \leq \phi(y^k)$.

    The rest of the proof is identical to (\ref{sec:PV_theorem}).
\end{proof}

\section{Broader Impact}\label{app:broader_impact}

The main impact of our work, both positive and negative, is in the ability to deploy higher-quality LLMs to run on memory-limited devices like desktops, laptops, and phones. On the positive side, this would allow practitioners to develop offline LLM applications (e.g. translate service), lower-latency chat assistants that are not dependant on network latency, or privacy-sensitive LLM applications where the user's private data never leaves their device. 
Furthermore, this can facilitate the creation of free open-source software based on LLMs by eliminating the need to maintain costly inference servers on the backend.
Since phones are everywhere and LLMs are powerful general-purpose tools, PV-tuned models could significantly impact how the general population uses LLMs to complete tasks.

However, LLMs are still a dual-use technology with the potential for significant benefits and serious harm. Risks range from deliberate misuse (e.g. spam generation) and accidental misuse to negative economic side-effects. An upper bound on these risks is that PV tuning does not create new (potentially risky) LLM capabilities, merely making existing ones more accessible.

\section*{NeurIPS Paper Checklist}

\begin{enumerate}

\item {\bf Claims}
    \item[] Question: Do the main claims made in the abstract and introduction accurately reflect the paper's contributions and scope?
    \item[] Answer: \answerYes{}
    \item[] Justification: Our main claims are that the PV-tuning algorithm 1) achievstate-of-the-artart quantized LLM quality for 1-2 bits per parameter (backed by Section~\ref{sect:experiments_maintable}), 2) Paretoeto optimal at around 2 bits (also Section~\ref{sect:experiments_maintable}), 3) and is compatible with various methods (Section~\ref{sect:experiments_ablation}). In the introduction, we also claim that the advanced quantized representations, such as those having sparse outliers, do not give significant benefit on top of simple vector quantization with finetuning: this part is backed by~\ref{sect:experiments_representations}.
    \item[] Guidelines:
    \begin{itemize}
        \item The answer NA means that the abstract and introduction do not include the claims made in the paper.
        \item The abstract and/or introduction should clearly state the claims made, including the contributions made in the paper and important assumptions and limitations. A No or NA answer to this question will not be perceived well by the reviewers. 
        \item The claims made should match theoretical and experimental results, and reflect how much the results can be expected to generalize to other settings. 
        \item It is fine to include aspirational goals as motivation as long as it is clear that these goals are not attained by the paper. 
    \end{itemize}

\item {\bf Limitations}
    \item[] Question: Does the paper discuss the limitations of the work performed by the authors?
    \item[] Answer: \answerYes{}
    \item[] Justification: We discuss the methodological limitations of our study near the end, after Section~\ref{sect:experiments_maintable}. We also explain limitations for practitioners in Section~\ref{sect:method_details}.
    \item[] Guidelines:
    \begin{itemize}
        \item The answer NA means that the paper has no limitation while the answer No means that the paper has limitations, but those are not discussed in the paper. 
        \item The authors are encouraged to create a separate "Limitations" section in their paper.
        \item The paper should point out any strong assumptions and how robust the results are to violations of these assumptions (e.g., independence assumptions, noiseless settings, model well-specification, asymptotic approximations only holding locally). The authors should reflect on how these assumptions might be violated in practice and what the implications would be.
        \item The authors should reflect on the scope of the claims made, e.g., if the approach was only tested on a few datasets or with a few runs. In general, empirical results often depend on implicit assumptions, which should be articulated.
        \item The authors should reflect on the factors that influence the performance of the approach. For example, a facial recognition algorithm may perform poorly when the image resolution is low or images are taken in low lighting. Or a speech-to-text system might not be used reliably to provide closed captions for online lectures because it fails to handle technical jargon.
        \item The authors should discuss the computational efficiency of the proposed algorithms and how they scale with dataset size.
        \item If applicable, the authors should discuss possible limitations of their approach to addressing problems of privacy and fairness.
        \item While the authors might fear that complete honesty about limitations might be used by reviewers as grounds for rejection, a worse outcome might be that reviewers discover limitations that aren't acknowledged in the paper. The authors should use their best judgment and recognize that individual actions in favor of transparency play an important role in developing norms that preserve the integrity of the community. Reviewers will be specifically instructed to not penalize honesty concerning limitations.
    \end{itemize}

\item {\bf Theory Assumptions and Proofs}
    \item[] Question: For each theoretical result, does the paper provide the full set of assumptions and a complete (and correct) proof?
    \item[] Answer: \answerYes{} 
    \item[] Justification: We carefully introduced the assumptions (e.g. that $\phi(\cdot)$ is L-smooth) and provided proofs in appendix. To the best of our knowledge, these proofs are both correct and complete.
    \item[] Guidelines:
    \begin{itemize}
        \item The answer NA means that the paper does not include theoretical results. 
        \item All the theorems, formulas, and proofs in the paper should be numbered and cross-referenced.
        \item All assumptions should be clearly stated or referenced in the statement of any theorems.
        \item The proofs can either appear in the main paper or the supplemental material, but if they appear in the supplemental material, the authors are encouraged to provide a short proof sketch to provide intuition. 
        \item Inversely, any informal proof provided in the core of the paper should be complemented by formal proofs provided the in appendix or supplemental material.
        \item Theorems and Lemmas that the proof relies upon should be properly referenced. 
    \end{itemize}

    \item {\bf Experimental Result Reproducibility}
    \item[] Question: Does the paper fully disclose all the information needed to reproduce the main experimental results of the paper to the extent that it affects the main claims and/or conclusions of the paper (regardless of whether the code and data are provided or not)?
    \item[] Answer: \answerYes{} 
    \item[] Justification: We train open-access LLMs on open datasets and release our full training code. We do our best to provide instructions and hyperparameters in the code, though running our algorithm in different conditions may require basic tuning.
    \item[] Guidelines:
    \begin{itemize}
        \item The answer NA means that the paper does not include experiments.
        \item If the paper includes experiments, a No answer to this question will not be perceived well by the reviewers: Making the paper reproducible is important, regardless of whether the code and data are provided or not.
        \item If the contribution is a dataset and/or model, the authors should describe the steps taken to make their results reproducible or verifiable. 
        \item Depending on the contribution, reproducibility can be accomplished in various ways. For example, if the contribution is a novel architecture, describing the architecture fully might suffice, or if the contribution is a specific model and empirical evaluation, it may be necessary to either make it possible for others to replicate the model with the same dataset, or provide access to the model. In general. releasing code and data is often one good way to accomplish this, but reproducibility can also be provided via detailed instructions for how to replicate the results, access to a hosted model (e.g., in the case of a large language model), releasing of a model checkpoint, or other means that are appropriate to the research performed.
        \item While NeurIPS does not require releasing code, the conference does require all submissions to provide some reasonable avenue for reproducibility, which may depend on the nature of the contribution. For example
        \begin{enumerate}
            \item If the contribution is primarily a new algorithm, the paper should make it clear how to reproduce that algorithm.
            \item If the contribution is primarily a new model architecture, the paper should describe the architecture clearly and fully.
            \item If the contribution is a new model (e.g., a large language model), then there should either be a way to access this model for reproducing the results or a way to reproduce the model (e.g., with an open-source dataset or instructions for how to construct the dataset).
            \item We recognize that reproducibility may be tricky in some cases, in which case authors are welcome to describe the particular way they provide for reproducibility. In the case of closed-source models, it may be that access to the model is limited in some way (e.g., to registered users), but it should be possible for other researchers to have some path to reproducing or verifying the results.
        \end{enumerate}
    \end{itemize}

\item {\bf Open access to data and code}
    \item[] Question: Does the paper provide open access to the data and code, with sufficient instructions to faithfully reproduce the main experimental results, as described in supplemental material?
    \item[] Answer: \answerYes{} 
    \item[] Justification: As we state above, we release the full implementation for the PV algorithm with requisite instructions. We do not introduce new datasets and use openly available ones. We also plan to release the main quantized models in the non-anonymized version of the paper, since it would be impractical to upload them with the supplementary zip archive.
    \item[] Guidelines:
    \begin{itemize}
        \item The answer NA means the paper does not include experiments requiring code.
        \item Please see the NeurIPS code and data submission guidelines (\url{https://nips.cc/public/guides/CodeSubmissionPolicy}) for more details.
        \item While we encourage the release of code and data, we understand that this might not be possible, so “No” is an acceptable answer. Papers cannot be rejected simply for not including code, unless this is central to the contribution (e.g., for a new open-source benchmark).
        \item The instructions should contain the exact command and environment needed to run to reproduce the results. See the NeurIPS code and data submission guidelines (\url{https://nips.cc/public/guides/CodeSubmissionPolicy}) for more details.
        \item The authors should provide instructions on data access and preparation, including how to access the raw data, preprocessed data, intermediate data generated data, etc.
        \item The authors should provide scripts to reproduce all experimental results for the new proposed method and baselines. If only a subset of experiments are reproducible, they should state which ones are omitted from the script and why.
        \item At submission time, to preserve anonymity, the authors should release anonymized versions (if applicable).
        \item Providing as much information as possible in supplemental material (appended to the paper) is recommended, but including URLs to data and code is permitted.
    \end{itemize}

\item {\bf Experimental Setting/Details}
    \item[] Question: Does the paper specify all the training and test details (e.g., data splits, hyperparameters, how they were chosen, type of optimizer, etc.) necessary to understand the results?
    \item[] Answer: \answerYes{} 
    \item[] Justification: We describe our setup in Sections~\ref{sect:method_details} and~\ref{sect:experiments}, with additional hyperparameters baked into the supplementary code.
    \item[] Guidelines:
    \begin{itemize}
        \item The answer NA means that the paper does not include experiments.
        \item The experimental setting should be presented in the core of the paper to a level of detail that is necessary to appreciate the results and make sense of them.
        \item The full details can be provided either with the code, the in appendix, or as supplemental material.
    \end{itemize}

\item {\bf Experiment Statistical Significance}
    \item[] Question: Does the paper report error bars suitably and correctly defined or other appropriate information about the statistical significance of the experiments?
    \item[] Answer: \answerNo{} 
    \item[] Justification: We report error bars for small-scale experiments~\ref{app:experiments_smallscale}. For full fine-tuning runs, we do not include error bars since running those would be prohibitively costly for us.
    \item[] Guidelines:
    \begin{itemize}
        \item The answer NA means that the paper does not include experiments.
        \item The authors should answer "Yes" if the results are accompanied by error bars, confidence intervals, or statistical significance tests, at least for the experiments that support the main claims of the paper.
        \item The factors of variability that the error bars are capturing should be clearly stated (for example, train/test split, initialization, random drawing of some parameter, or overall run with given experimental conditions).
        \item The method for calculating the error bars should be explained (closed form formula, call to a library function, bootstrap, etc.)
        \item The assumptions made should be given (e.g., Normally distributed errors).
        \item It should be clear whether the error bar is the standard deviation or the standard error of the mean.
        \item It is OK to report 1-sigma error bars, but one should state it. The authors should preferably report a 2-sigma error bar and then state that they have a 96\%CI if the hypothesis of Normality of errors is not verified.
        \item For asymmetric distributions, the authors should be careful not to show in tables or figures symmetric error bars that would yield results that are out of range (e.g. negative error rates).
        \item If error bars are reported in tables or plots, The authors should explain in the text how they were calculated and reference the corresponding figures or tables in the text.
    \end{itemize}

\item {\bf Experiments Compute Resources}
    \item[] Question: For each experiment, does the paper provide sufficient information on the computer resources (type computing workers, memory, time of execution) needed to reproduce the experiments?
    \item[] Answer: \answerYes{} 
    \item[] Justification: We report the hardware setting, calibration setting, time, and memory requirements in Section~\ref{sect:experiments}, which is sufficient for practitioners to reproduce our results. We omit some details, e.g. which runs were restarted due to unrelated server infrastructure issues.
    \item[] Guidelines:
    \begin{itemize}
        \item The answer NA means that the paper does not include experiments.
        \item The paper should indicate the type of compute worker CPU or GPU, internal cluster, or cloud provider, including relevant memory and storage.
        \item The paper should provide the amount of compute required for each of the individual experimental runs as well as estimate the total compute. 
        \item The paper should disclose whether the full research project required mocomputingute than the experiments reported in the paper (e.g., preliminary or failed experiments that didn't make it into the paper). 
    \end{itemize}
    
\item {\bf Code Of Ethics}
    \item[] Question: Does the research conducted in the paper conform, in every respect, with the NeurIPS Code of Ethics \url{https://neurips.cc/public/EthicsGuidelines}?
    \item[] Answer: \answerYes{} 
    \item[] Justification: Our research is focused on the base capability and accessibility of LLMs. While working on LLMs always has potential externalities, our specific work adheres to the ethics guidelines.
    \item[] Guidelines:
    \begin{itemize}
        \item The answer NA means that the authors have not reviewed the NeurIPS Code of Ethics.
        \item If the authors answer No, they should explain the special circumstances that require a deviation from the Code of Ethics.
        \item The authors should make sure to preserve anonymity (eg. if there is a special consideration due to laws or regulations in their jurisdiction).
    \end{itemize}

\item {\bf Broader Impacts}
    \item[] Question: Does the paper discuss both potential positive societal impacts and negative societal impacts of the work performed?
    \item[] Answer: \answerYes{} 
    \item[] Justification: While our work is more concerned with fundamental matters of discrete optimization and LLM quantization, we provide a brief overview of its societal impacts in Appendix~\ref{app:broader_impact}.
    \item[] Guidelines:
    \begin{itemize}
        \item The answer NA means that there is no societal impact on the work performed.
        \item If the authors answer NA or No, they should explain why their work has no societal impact or why the paper does not address societal impact.
        \item Examples of negative societal impacts include potential malicious or unintended uses (e.g., disinformation, generating fake profiles, surveillance), fairness considerations (e.g., deployment of technologies that could make decisions that unfairly impact specific groups), privacy considerations, and security considerations.
        \item The conference expects that many papers will be foundational research and not tied to particular applications, let alone deployments. However, if there is a direct path to any negative applications, the authors should point it out. For example, it is legitimate to point out that an improvement in the quality of generative models could be used to generate deepfakes for disinformation. On the other hand, it is not needed to point out that a generic algorithm for optimizing neural networks could enable people to train models that generate Deepfakes faster.
        \item The authors should consider possible harms that could arise when the technology is being used as intended and functioning correctly harms that could arise when the technology is being used as intended but gives incorrect results, and harms following from (intentional or unintentional) misuse of the technology.
        \item If there are negative societal impacts, the authors could also discuss possible mitigation strategies (e.g., gated release of models, providing defenses in addition to attacks, mechanisms for monitoring misuse, mechanisms to monitor how a system learns from feedback over time, improving the efficiency and accessibility of ML).
    \end{itemize}
    
\item {\bf Safeguards}
    \item[] Question: Does the paper describe safeguards that have been put in place for responsible release of data or models that have a high risk for misuse (e.g., pre-trained language models, image generators, or scraped datasets)?
    \item[] Answer: \answerNA{} 
    \item[] Justification: Our work does not release any newer models, and quantizing existing models typically results in a less capable (and therefore less risky) model.
    \item[] Guidelines:
    \begin{itemize}
        \item The answer NA means that the paper poses no such risks.
        \item Released models that have a high risk for misuse or dual-use should be released with necessary safeguards to allow for controlled use of the model, for example by requiring that users adhere to usage guidelines or restrictions to access the model or implementing safety filters. 
        \item Datasets that have been scraped from the Internet could pose safety risks. The authors should describe how they avoided releasing unsafe images.
        \item We recognize that providing effective safeguards is challenging, and many papers do not require this, but we encourage authors to take this into account and make the best faith effort.
    \end{itemize}

\item {\bf Licenses for existing assets}
    \item[] Question: Are the creators or original owners of assets (e.g., code, data, models), used in the paper, properly credited, and are the license and terms of use explicitly mentioned and properly respected?
    \item[] Answer: \answerYes{} 
    \item[] Justification: We use academically published artifacts (datasets, models, etc) and cite their respective authors.
    \item[] Guidelines:
    \begin{itemize}
        \item The answer NA means that the paper does not use existing assets.
        \item The authors should cite the original paper that produced the code package or dataset.
        \item The authors should state which version of the asset is used and, if possible, include a URL.
        \item The name of the license (e.g., CC-BY 4.0) should be included for each asset.
        \item For scraped data from a particular source (e.g., website), the copyright and terms of service of that source should be provided.
        \item If assets are released, the license, copyright information, and terms of use in the package should be provided. For popular datasets, \url{paperswithcode.com/datasets} has curated licenses for some datasets. Their licensing guide can help determine the license of a dataset.
        \item For existing datasets that are re-packaged, both the original license and the license of the derived asset (if it has changed) should be provided.
        \item If this information is not available online, the authors are encouraged to reach out to the asset's creators.
    \end{itemize}

\item {\bf New Assets}
    \item[] Question: Are new assets introduced in the paper well documented and is the documentation provided alongside the assets?
    \item[] Answer: \answerYes{} 
    \item[] Justification: We release the code and provide documentation in the form of README and detailed docstrings. Both are included in the supplementary archive.
    \item[] Guidelines:
    \begin{itemize}
        \item The answer NA means that the paper does not release new assets.
        \item Researchers should communicate the details of the dataset/code/model as part of their submissions via structured templates. This includes details about training, license, limitations, etc. 
        \item The paper should discuss whether and how consent was obtained from people whose asset is used.
        \item At submission time, remember to anonymize your assets (if applicable). You can either create an anonymized URL or include an anonymized zip file.
    \end{itemize}

\item {\bf Crowdsourcing and Research with Human Subjects}
    \item[] Question: For crowdsourcing experiments and research with human subjects, does the paper include the full text of instructions given to participants and screenshots, if applicable, as well as details about compensation (if any)? 
    \item[] Answer: \answerNA{} 
    \item[] Justification: We do not use human subjects in our experiments. 
    \item[] Guidelines:
    \begin{itemize}
        \item The answer NA means that the paper does not involve crowdsourcing nor research with human subjects.
        \item Including this information in the supplemental material is fine, but if the main contribution of the paper involves human subjects, then as much detail as possible should be included in the main paper. 
        \item According to the NeurIPS Code of Ethics, workers involved in data collection, curation, or other labor should be paid at least the minimum wage in the country of the data collector. 
    \end{itemize}

\item {\bf Institutional Review Board (IRB) Approvals or Equivalent for Research with Human Subjects}
    \item[] Question: Does the paper describe potential risks incurred by study participants, whether such risks were disclosed to the subjects, and whether Institutional Review Board (IRB) approvals (or an equivalent approval/review based on the requirements of your country or institution) were obtained?
    \item[] Answer: \answerNA{} 
    \item[] Justification: We did not conduct any research on human subjects.
    \item[] Guidelines:
    \begin{itemize}
        \item The answer NA means that the paper does not involve crowdsourcing nor research with human subjects.
        \item Depending on the country in which research is conducted, IRB approval (or equivalent) may be required for any human subjects research. If you obtained IRB approval, you should clearly state this in the paper. 
        \item We recognize that the procedures for this may vary significantly between institutions and locations, and we expect authors to adhere to the NeurIPS Code of Ethics and the guidelines for their institution. 
        \item For initial submissions, do not include any information that would break anonymity (if applicable), such as the institution conducting the review.
    \end{itemize}

\end{enumerate}

\end{document}